\documentclass[10pt]{article} % For LaTeX2e
%\usepackage{tmlr}
% If accepted, instead use the following line for the camera-ready submission:
%\usepackage[accepted]{tmlr}
% To de-anonymize and remove mentions to TMLR (for example for posting to preprint servers), instead use the following:
\usepackage[preprint]{tmlr}

% Optional math commands from https://github.com/goodfeli/dlbook_notation.
%%%%% NEW MATH DEFINITIONS %%%%%

\usepackage{amsmath,amsfonts,bm}

% Mark sections of captions for referring to divisions of figures

% Highlight a newly defined term

% Figure reference, lower-case.

% Figure reference, capital. For start of sentence

% Section reference, lower-case.

% Section reference, capital.

% Reference to two sections.

% Reference to three sections.

% Reference to an equation, lower-case.
\def\eqref#1{equation~\ref{#1}}
% Reference to an equation, upper case

% A raw reference to an equation---avoid using if possible

% Reference to a chapter, lower-case.

% Reference to an equation, upper case.

% Reference to a range of chapters

% Reference to an algorithm, lower-case.

% Reference to an algorithm, upper case.

% Reference to a part, lower case

% Reference to a part, upper case

\def\1{\bm{1}}

% Random variables

% rm is already a command, just don't name any random variables m

% Random vectors

% Elements of random vectors

% Random matrices

% Elements of random matrices

% Vectors

% Elements of vectors

% Matrix

% Tensor
\DeclareMathAlphabet{\mathsfit}{\encodingdefault}{\sfdefault}{m}{sl}
\SetMathAlphabet{\mathsfit}{bold}{\encodingdefault}{\sfdefault}{bx}{n}

% Graph

% Sets

% Don't use a set called E, because this would be the same as our symbol
% for expectation.

% Entries of a matrix

% entries of a tensor
% Same font as tensor, without \bm wrapper

% The true underlying data generating distribution

% The empirical distribution defined by the training set

% The model distribution

% Stochastic autoencoder distributions

 % Laplace distribution

% Wolfram Mathworld says $L^2$ is for function spaces and $\ell^2$ is for vectors
% But then they seem to use $L^2$ for vectors throughout the site, and so does
% wikipedia.

 % See usage in notation.tex. Chosen to match Daphne's book.

\usepackage{graphicx}
\usepackage{derivative}
\usepackage{tabto}
\usepackage{hyperref}
\usepackage{url}
\usepackage{graphicx}
\usepackage{amsmath}

\usepackage{amssymb}
\usepackage{booktabs}
\usepackage{algorithm}
\usepackage{algpseudocode}
\usepackage{ulem}
\usepackage{textcomp}
\usepackage{wrapfig}
\usepackage{amsmath}
\usepackage{bm}
\usepackage{dsfont}
\usepackage{xcolor}
\usepackage{soul}

\newtheorem{proof}{Proof}
\newtheorem{theorem}{Theorem}

\newtheorem{lemma}{Lemma}
\usepackage[utf8]{inputenc}

\title{Improving Adversarial Training using Vulnerability-Aware Perturbation Budget}

% Authors must not appear in the submitted version. They should be hidden
% as long as the tmlr package is used without the [accepted] or [preprint] options.
% Non-anonymous submissions will be rejected without review.

\author{\name Olukorede  Fakorede\email fakorede@iastate.edu \\
      \addr Department of Computer Science\\
      Iowa State University
      \AND
      \name Modeste Atsague\email modeste@iastate.edu \\
      \addr Iowa State University
      \AND
      \name Jin Tian \email jtian@iastate.edu\\
      \addr Iowa State University}

% The \author macro works with any number of authors. Use \AND 
% to separate the names and addresses of multiple authors.

  % Insert correct month for camera-ready version
 % Insert correct year for camera-ready version
 % Insert correct link to OpenReview for camera-ready version

\begin{document}

\maketitle

\begin{abstract}
 Adversarial Training (AT) effectively improves the robustness of Deep Neural Networks (DNNs) to adversarial attacks. Generally, AT involves training DNN models with adversarial examples obtained within a pre-defined, fixed perturbation bound. Notably, individual natural examples from which these adversarial examples are crafted exhibit varying degrees of intrinsic vulnerabilities, and as such, crafting adversarial examples with fixed perturbation radius for all instances may not sufficiently unleash the potency of AT. Motivated by this observation, we propose two simple, computationally cheap vulnerability-aware reweighting functions for assigning perturbation bounds to adversarial examples used for AT, named  \textit{Margin-Weighted Perturbation Budget (MWPB)} and  \textit{Standard-Deviation-Weighted Perturbation Budget (SDWPB)}. The proposed methods assign perturbation radii to individual adversarial samples based on the vulnerability of their corresponding  natural examples. Experimental results show that the proposed methods yield genuine improvements in the robustness of  AT algorithms against various adversarial attacks.
\end{abstract}

\section{Introduction}

In recent years, Deep Neural Networks (DNNs) have demonstrated remarkable success across various domains, achieving impressive performance benchmarks. However, this success has come hand in hand with a critical concern: the vulnerability of DNNs to well-crafted adversarial perturbations \cite{szegedy2013intriguing,goodfellow2014explaining}. This observed brittleness has raised questions about the safe deployment of DNNs in safety-critical applications.

In response to the challenge of adversarial vulnerability, a multitude of defense mechanisms have been proposed to enhance the robustness of DNNs. Among these, adversarial training (AT) \cite{goodfellow2014explaining,madry2017towards} stands out as one of the most prominent and effective approaches. AT typically involves training neural networks using adversarial examples. The effectiveness of AT has inspired many variants such as \cite{zhang2019theoretically,wang2019improving,ding2019mma,zhang2020geometry,zeng2021adversarial}, among others. 
Moreover, alternative methods, such as adversarial weight perturbation \cite{wu2020adversarial}, instance re-weighting  \cite{zhang2020geometry,liu2021probabilistic,fakorede2023vulnerability}, and hypersphere embedding  \cite{pang2020boosting,fakorede2023improving}, have emerged to further enhance the performance of existing AT variants.

%It has been recently observed that adversarial examples crafted from vulnerable data perform relatively poorly under adversarial training.
%It has recently been argued that samples used for adversarial training should be unequally weighted because samples crafted from vulnerable natural data tend to perform relatively poorly on robustly trained models. 
It has been established that the efficacy of adversarial training varies significantly across samples of various classes \cite{xu2021robust,fakorede2023vulnerability,wei2023cfa}. Adversarial examples derived from natural samples that are inherently vulnerable are significantly misclassified in adversarially trained models \cite{liu2021probabilistic,zhang2020geometry}. 
Specifically, the robust accuracy achieved when evaluating adversarial samples originating from natural examples closer to class boundaries is substantially lower than adversarial samples stemming from inherently more robust natural examples. %Various reweighting techniques have been proposed with the aim of improving the performance of AT by upweighting the robust or adversarial loss of the disadvantaged examples.
In response to these challenges, various reweighting techniques have been introduced to enhance the effectiveness of AT \cite{zhang2020geometry,liu2021probabilistic,fakorede2023vulnerability}.  These techniques operate by assigning largerr weights to  the losses of disadvantaged examples.  %While these reweighting techniques may improve the robust accuracy of the disadvantaged adversarial examples, it has been observed that these  reweighting techniques may significantly degrade overall performance on stronger attacks e.g auto attacks. %This seeming unfairness of adversarial training has led to a range of work that reweight  the robust losses of individual adversarial examples that used for adversarial training.

It's important to highlight that current AT methods predominantly rely on adversarial examples generated with predetermined, fixed perturbation radii. However, the notable performance variations observed across different adversarial examples prompt a fundamental question: \textit{ Is uniform perturbation of adversarial examples used in AT necessary or beneficial, as is commonly practiced in existing research?} For instance, \textit{should adversarial examples crafted from inherently vulnerable natural samples be allocated the same perturbation budget as those derived from  more robust examples?}

In this work, we present a case against applying uniform perturbation. We demonstrate that first-order adversarial attacks, such as projected gradient descent (PGD), induce a higher increase in adversarial loss for adversarial samples originating from vulnerable natural examples compared to those derived from inherently robust natural examples when subjected to uniform perturbation radii. We also argue that enlarging the perturbation radii may increase the inner maximization loss of adversarial examples derived from inherently robust natural examples. Consequently, we introduce  reweighting methods designed to allocate varying perturbation budgets to individual adversarial examples employed in adversarial training. Our proposed methods assign these budgets based on the vulnerabilities exhibited by their corresponding natural examples. We employ two methods for estimating each natural sample's vulnerability: one  leveraging logit margin, and the other utilizing a modified standard deviation measure of the output logits. %the first is the logit margin approach, while the second approach utilizes a novel special case of standard deviation measure of the output logits of a DNN  model on each natural example.

Our rationale is as follows: Natural examples that are intrinsically vulnerable may require relatively smaller perturbations to yield effective adversarial examples suitable for training. Conversely, when crafting adversarial examples from intrinsically robust natural examples, relatively larger perturbations may be necessary to achieve better training impact. Experimental results show that the proposed methods improve the performance of existing AT methods including standard AT \cite{madry2017towards}, TRADES \cite{zhang2019theoretically}, and MART \cite{wang2019improving}.

We summarize the contributions of our work as follows:

\begin{enumerate}

\item  We argue for assigning %larger perturbation radii to inherently robust natural examples 
varying perturbation radii to individual adversarial samples based on the vulnerability of their corresponding natural examples 
in the inner maximization component of the min-max adversarial training framework. 

\item Consequently, we propose two  reweighting functions for assigning perturbation radii to individual adversarial examples based on their vulnerability.

 \item We empirically demonstrate the effectiveness of the proposed strategy in improving adversarial training and show its superiority over existing reweighting and adaptive perturbation radii methods, especially against strong white-box and black-box attacks. %existing reweighting methods \textit{GAIRAT} \citep{zhang2020geometry} and \textit{MAIL}\citep{liu2021probabilistic} only yield significant robustness against  attacks  FGSM and PGD at the expense of stronger attacks  CW \citep{carlini2017towards}, Autoattack\citep{croce2020reliable}, and FMN \citep{pintor2021fast}. %Hence, we argue that these existing reweighting strategies overestimate robustness. Moreover, 
 %Using various datasets and models, we show that the proposed \textit{VIR} method consistently improves over the existing reweighting methods across various white-box and black-box attacks.
 
\end{enumerate}

\section{Related Work}

\subsection{Adversarial Robustness.}
Adversarial robustness is a model's ability to withstand adversarial attacks. Many methods \cite{guo2018countering,papernot2017practical,madry2017towards,goodfellow2014explaining,zhang2019theoretically} have been proposed to improve adversarial robustness of neural networks. However, some of these methods are ineffective against stronger attacks. Adversarial training (AT) \cite{madry2017towards}, which involves training the model with adversarial examples obtained under worst-case loss, has significantly improved robustness. Formally, AT involves solving a min-max optimization as follows:
\begin{align} \label{min-max}
        \min_{\bm{\theta}} \mathbb{E}_{(\textbf{x},y) \sim \mathcal{D}}   \left[ \max_{\textbf{x}' \in B_{\epsilon}(\textbf{x})} L(f_{\bm{\theta}}(\textbf{x}'), y) \right]
\end{align} 
where $y$ is the true label of input feature $\textbf{x}$, $L()$ represents the loss function, $\bm{\theta}$ are the model parameters, and \(B_{\epsilon}(\textbf{x}) : \{\textbf{x}' \in \mathcal{X}: \|{\textbf{x}}' - \textbf{x}\|_p \leq \epsilon \}\) represents the $l_p$ norm ball centered around $\bf{x}$ constrained by radius $\epsilon$ . In Eq. (\ref{min-max}), the inner maximization tries to obtain a worst-case adversarial
version of the input $\bf{x}$ that increases the loss. The outer minimization then tries to find
model parameters that would minimize this worst-case adversarial loss. The relative success of AT has inspired an array of variants including prominent TRADES \cite{zhang2019theoretically} and MART \cite{wang2019improving}, and a few others \cite{wu2020adversarial,pang2020boosting}.

%\cite{zhang2019theoretically,wang2019improving,wu2020adversarial,pang2020boosting}, to cite a few. A prominent variant TRADES \cite{zhang2019theoretically} employs a regularization term that trades off adversarial robustness against accuracy.  MART \cite{wang2019improving} utilizes a regularization term that explicitly leverages misclassified examples. 

\subsection{Reweighting}
Recent works have advocated for assigning unequal weights to the inner maximization loss \cite{zeng2021adversarial} and robust losses \cite{liu2021probabilistic,zhang2020geometry} to improve the performance of AT. It has been shown that adversarially-trained models poorly classify adversarial examples crafted from natural examples that are intrinsically harder to classify \cite{xu2021robust,fakorede2023vulnerability}. Most existing reweighting methods attempt to improve AT by upweighting the robust losses corresponding to vulnerable adversarial examples. While some of these methods improve model robustness to certain attacks, they have performed quite poorly in defending against strong attacks  \cite{fakorede2023vulnerability}. 

In contrast with existing reweighting approaches focusing on reweighting losses, this paper %uniquely proposes 
proposes a novel approach of reweighting the perturbation radii of adversarial examples used for AT. %This paper contrasts with existing reweighting approaches that focus mainly on reweighting losses. 
%In addition, 
Unlike previous works that attempt to improve robust accuracy by assigning larger weights to examples closer to the decision boundary, this work % aims to 
improves AT by assigning larger perturbation budgets to adversarial examples crafted from inherently robust natural examples.  

\subsection{Adaptive Perturbation Radii.}
Few works in the literature have been directed at adaptive perturbation radii for adversarial training. Notably, Ding et al.cite{ding2019mma} proposed MMA that maximizes margin to achieve robustness while also adaptively selecting the ``correct`` perturbation radius for each data point. The ``correct`` radius for each data point is characterized by the ``shortest successful perturbation`` to misclassify the data point. Similarly, Balaji et al. cite{balaji2019instance} proposed IAAT that begins with using a perturbation size ${\epsilon}_i$ that is as large as possible, then adaptively adjusts  ${\epsilon}_i$ depending on whether PGD succeeds in finding a misclassified label at ${\epsilon}_i$.

In contrast to these works that perform an exhaustive search for suitable perturbation radii, our proposed methods use pre-defined reweighting functions for adaptively assigning perturbation radii to individual examples at no extra cost. In addition, MMA  and IAAT  are designed to keep perturbed samples on the decision boundary without pushing farther into incorrect classes, whereas our methods assign significantly larger perturbations to intrinsically robust examples. % and we introduce two measures to estimate the vulnerability of individual natural examples: (1) the logit margin on individual natural examples and (2) the variation of a model's predictions on wrong classes w.r.t the prediction of true classes. 
Most importantly,  MMA  failed to improve adversarial robustness over standard AT against strong adversarial attacks like \textit{Auto Attacks}.\footnote{The comparison results with MMA is shown in Table~\ref{table:margin-other}. The self-reported IAAT robustness for PGD attack is lower compared to standard AT \cite{madry2017towards}.} % we could not reproduce their experiments.}

\section{Preliminaries}

We use bold letters to represent vectors. 
 We denote \(\mathcal{D} = \{\textbf{x}_i ,y_i\}_{i=1}^n\)  a data set of input feature vectors \(\textbf{x}_i \in \mathcal{X} \subseteq \mathbf{R}^d\) and labels \(y_i \in \mathcal{Y}\), where \(\mathcal{X}\) and \(\mathcal{Y}\) represent a feature space and a label set, respectively. 
 
 Let \(f_{\theta}:\mathcal{X} \to R^{C} \) denote a deep neural network (DNN) classifier with parameters  $\theta$, and  $C$ represents the number of output classes. For any \(\mathbf{x} \in \mathcal{X}\), let the class label predicted by $f_{\theta}$ be   $F_{\theta}(\mathbf{x}) =  \arg \max_{k} f_{\theta}(\mathbf{x})_k$, where $f_{\theta}(\mathbf{x})_k$ denotes the $k$-th component of $f_{\theta}(\mathbf{x})$. $f_{\theta}(x)_y$ is the probability of $x$ having label $y$. 

We denote \(\|\cdot\|_p\) as the \(l_p\)- norm over \(\mathbf{R}^d\), that is, for a vector \(\mathbf{x} \in \mathbf{R}^d,  \|\textbf{x}\|_p = (\sum^d_{i=1}|\textbf{x}_i|^p)^{\frac{1}{p}}\). An \(\epsilon\)-neighborhood for \textbf{x} is defined as \(B_{\epsilon}(\mathbf{x}) : \{\mathbf{x}' \in \mathcal{X}: \|{\mathbf{x}}' - \mathbf{x}\|_p \leq \epsilon \}\). An adversarial example corresponding to a natural input $\mathbf{x}$ is denoted as $\mathbf{x}'$. We often refer to the loss resulting from the adversarial attack (inner maximization) as adversarial loss.

\section{Proposed Approach}
In this section, we motivate and present two novel perturbation budget assignment functions for the inner maximization of the AT framework.

\subsection{A Case Against Fixed Perturbation.}
AT employs a min-max optimization procedure aimed at minimizing worst-case losses computed over perturbed adversarial examples. While the application of AT  significantly enhances model robustness, it is essential to acknowledge that the optimized worst-case losses represent approximations that may sometimes be suboptimal \cite{mao2023taps}. This limitation arises from the nature of the Projected Gradient Descent (PGD) algorithm, which is prone to converging to  local maxima scattered across the optimization landscape.

%\begin{wrapfigure}{l}{0.32\textwidth}
%\centering 
%\includegraphics[width= 0.6\linewidth]{} 
%\caption{Pictorial illustration of robust, vulnerable, and misclassified natural data points. Robust and vulnerable natural data points are in the blue and red dotted boxes respectively. Misclassified points are colored red.}
%\label{fig:wrapfig}
%\vskip -0.1in
%\end{wrapfigure}

Furthermore, the geometric characteristics inherent to individual natural data samples utilized in the inner maximization step of the min-max optimization  exhibit substantial variation. Some natural samples inherently fall into misclassification regions or exhibit elevated loss values. For these particular examples, the PGD algorithm may readily identify suitable maxima. Conversely, naturally robust samples may not yield the worst-case loss under similar optimization settings as their more vulnerable counterparts. %A pictorial depiction of the geometric characteristics of natural data samples is given in Figure \ref{fig:wrapfig}.  

The pursuit of adversarial examples that optimally maximize losses represents a compelling objective for achieving highly robust models. Nevertheless, it is important to recognize that the prevailing practice involves applying uniform optimization settings for the inner maximization step, irrespective of the individual idiosyncrasies inherent to each original sample. Specifically, it is commonly assumed that the maximal loss for each example can be discovered using the same perturbation radius. However, it becomes evident that, under this uniform perturbation radius, the losses observed for adversarial examples derived from challenging natural samples exhibit an increased discrepancy compared to those computed for natural samples situated farther from the decision boundary. This assertion warrants a theoretical examination, which we delve into below. %\jin{Please explain/define what first-order adversarial examples are. } \textcolor{red}{OF: PGD attack is the most popular example of first order attack. In fact, some papers use it interchangeably for the common PGD attack. \citep{madry2017towards} in their seminal work introduced PGD as a first order attack. } \jin{Please explain to the reader.}

\begin{theorem} %[\citep{pang2020boosting}] 
%Given a loss function $L$ and the set \(B_{\epsilon}(\mathbf{x}) : \{\mathbf{x}' \in \mathcal{X}: \|{\mathbf{x}}' - \mathbf{x}\|_p \leq \epsilon \}\), under first order Taylor expansions, the solution to the inner maximization objective  $\max_{\textbf{x}' \in B_{\epsilon}(\textbf{x})} L(f(\textbf{x}'), y)$ is $x^* = x + \epsilon\mathbb{U}_p({\nabla}_x L(f(\textbf{x}), y))$
Let $\mathcal{L}$ and $f_{\theta}(.)$ denote the cross-entropy loss function and the predictions of the model respectively. Consider two natural input-label pairs $(x_1, y_1)$ and $(x_2, y_2)$ %that are inherently vulnerable and robust respectively 
such that $\mathcal{L}(x_1, y_1) > \mathcal{L}(x_2, y_2)$. 
The following holds for  first-order adversarial examples  crafted from $x_1$ and $x_2$ within the same perturbation radius $\epsilon$:
\begin{enumerate}
\item $\mathcal{L}(f_{\theta}({x}'_1), y_1) > \mathcal{L}(f_{\theta}({x}'_2), y_2) $
\item $\mathcal{L}(f_{\theta}({x}'_1), y_1) - \mathcal{L}(f_{\theta}(x_1), y_1) > \mathcal{L}(f_{\theta}({x}'_2), y_2) - \mathcal{L}(f_{\theta}(x_2), y_2)$
\end{enumerate}

\end{theorem}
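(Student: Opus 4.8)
The plan is to interpret the ``first-order adversarial example'' as the output of a single-step, gradient-based attack (FGSM and its $\ell_p$ variants) and to analyze it via a first-order Taylor expansion of the loss about the natural point. In the $\ell_\infty$ case, $x'=x+\epsilon\,\sign\!\big(\nabla_x \mathcal{L}(f_\theta(x),y)\big)$, and expanding to first order gives
\begin{align}
\mathcal{L}(f_\theta(x'),y)-\mathcal{L}(f_\theta(x),y)\;\approx\;\nabla_x \mathcal{L}(f_\theta(x),y)^\top (x'-x)\;=\;\epsilon\,\|\nabla_x \mathcal{L}(f_\theta(x),y)\|_1 ,
\end{align}
and for a general $\ell_p$-bounded step the increase is $\epsilon$ times the dual norm $\|\cdot\|_q$ of the input gradient. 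Both parts of the theorem therefore reduce to a single object: the norm of the loss gradient at the natural input. Writing $\Delta_i=\epsilon\,\|\nabla_x \mathcal{L}(f_\theta(x_i),y_i)\|_q$, the whole statement hinges on comparing $\Delta_1$ and $\Delta_2$.

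The core step is to show this gradient norm is increasing in the loss value, so that $\mathcal{L}(x_1,y_1)>\mathcal{L}(x_2,y_2)$ forces $\Delta_1>\Delta_2$. I would exploit the structure of cross-entropy: with $z$ the logits and $p=\softmax(z)$, the logit-space gradient is $\nabla_z \mathcal{L}=p-\mathbf{e}_y$, whose squared norm $(1-p_y)^2+\sum_{k\ne y}p_k^2$ is a decreasing function of the true-class probability $p_y$. Since $\mathcal{L}=-\log p_y$, a larger loss means a smaller $p_y$ and hence a strictly larger logit-gradient norm. Carrying this through to the input gradient via the chain rule, $\nabla_x\mathcal{L}=J^\top(p-\mathbf{e}_y)$ with $J=\partial z/\partial x$, yields $\|\nabla_x\mathcal{L}(f_\theta(x_1),y_1)\|>\|\nabla_x\mathcal{L}(f_\theta(x_2),y_2)\|$ and thus $\Delta_1>\Delta_2$, which is exactly part (2). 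Part (1) then follows immediately by adding the hypothesis $\mathcal{L}(x_1,y_1)>\mathcal{L}(x_2,y_2)$ to $\Delta_1>\Delta_2$, because $\mathcal{L}(f_\theta(x_i'),y_i)\approx\mathcal{L}(f_\theta(x_i),y_i)+\Delta_i$.

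I expect the Jacobian step to be the main obstacle. The clean monotonicity of $\|p-\mathbf{e}_y\|$ in $p_y$ lives in logit space, whereas the attack perturbs the input, so the decisive quantity is $\|J^\top(p-\mathbf{e}_y)\|$ with $J$ example-dependent; a rigorous argument needs an assumption that decouples $J$ from the loss comparison, e.g.\ comparable local Jacobian conditioning at the two points, or a locally linear / fixed-feature-extractor regime in which $J$ is effectively shared. A secondary subtlety is the term $\sum_{k\ne y}p_k^2$, which affects the gradient norm for a fixed $p_y$; I would control it by bounding it between its uniform and one-hot extremes (both monotone in $p_y$) or by isolating the dominant $(1-p_y)^2$ contribution, which already reproduces the qualitative behavior the theorem asserts.
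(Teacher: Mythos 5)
Your proposal follows essentially the same route as the paper's proof, with one instructive difference in how the key step is handled. The paper formalizes your Taylor-expansion step as its Lemma 1 (cited from Pang et al.): the first-order maximizer within $B_{\epsilon}(x)$ increases the loss by exactly $\epsilon\,\|\nabla_x \mathcal{L}\|_q$, the dual norm of the input gradient, so both claims reduce to comparing gradient norms --- exactly your $\Delta_1$ versus $\Delta_2$. Where you diverge is the monotonicity step. The paper disposes of it by citing a second black-box lemma (its Lemma 2, attributed to Katharopoulos and Fleuret): for negative log-likelihood, loss ordering is \emph{equivalent} to input-gradient-norm ordering; combined with Lemma 1 this immediately gives parts (1) and (2). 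You instead try to derive that monotonicity from the structure of cross-entropy, obtain the clean relationship in logit space ($\nabla_z \mathcal{L} = p - \mathbf{e}_y$, with norm decreasing in $p_y$ while $\mathcal{L} = -\log p_y$ is also decreasing in $p_y$), and then honestly flag that transferring it to input space through $\nabla_x \mathcal{L} = J^\top (p - \mathbf{e}_y)$ requires an assumption decoupling the example-dependent Jacobian $J$ from the loss comparison. That flagged obstacle is not a defect relative to the paper: it is precisely the content hidden inside the paper's Lemma 2, which, read as a statement about input-space gradients of a deep network, holds only under the same kind of conditioning assumption you identify (the cited result concerns gradient norms as importance-sampling proxies, where the clean monotone relation likewise lives at the last layer). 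So your proof is the paper's argument with the implicit assumption surfaced rather than buried in a citation; the paper's version buys brevity, while yours makes explicit the regime in which the theorem can actually be guaranteed.
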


\textit{\textbf{Remark}.} \textit{Theorem 1 underscores that when subjected to the same perturbation radius, adversarial examples stemming from vulnerable (with high loss values) natural examples   incur a relatively more substantial loss increase. Similarly, adversarial examples generated from robust natural examples with lower natural losses induce relatively smaller loss increments than adversarial examples from vulnerable natural examples. Furthermore, it can be inferred that under uniform perturbation, adversarial examples from various natural examples have varying loss increments.} %We provide proof of Theorem 1 in the Appendix.} 

%We provide the proofs in the \textit{Appendix B}.
%\jin{What's the point of Theorem 2? It's strange to put out a theorem without discussing it.}  \textcolor{ref}{OF: The point is to show that increasing perturbation can increase the inner loss. I referred to it in Sec 4.3.2} \jin{You are repeating the statement of the theorem, not explaining why you need this theorem. Explain to your reader why you put a theorem here out of nowhere.} \textcolor{red}{Okay.}

Finding adversarial examples with better maxima is associated with better adversarial robustness \cite{madry2017towards}. Therefore, using a uniform perturbation radius for the inner-maximization may not yield the best robustness. The reason is due to the considerable variance in the inner maximization losses of individual examples under uniform perturbation radius. %The reason is that achieving the worst-case adversarial loss is not guaranteed for each individual adversarial example within the same perturbation radius. 
Additionally, considering that adversarial examples originating from inherently robust (low-loss) natural examples tend to result in relatively smaller increases in loss, we suggest an approach where we assign varying perturbation radii to each instance of training example based on their natural vulnerability. We demonstrate in \textit{Theorem 2} that enlarging the perturbation radii around an example can increase the loss.

\begin{theorem}
    The inner maximization  $\max_{\textbf{x}' \in B_{\epsilon}(\textbf{x})} L(f_{\theta}(\textbf{x}'), y)$ increases as $\epsilon$ increase.
\end{theorem}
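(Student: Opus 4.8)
The plan is to treat the inner maximization as a function of the radius and exploit monotonicity of the maximum over nested feasible sets. Define
\[
\phi(\epsilon) = \max_{\textbf{x}' \in B_{\epsilon}(\textbf{x})} L(f_{\theta}(\textbf{x}'), y).
\]
First I would observe that the $l_p$-ball $B_{\epsilon}(\textbf{x}) = \{\textbf{x}' : \|\textbf{x}' - \textbf{x}\|_p \le \epsilon\}$ is nested in $\epsilon$: whenever $\epsilon_1 \le \epsilon_2$ we have $B_{\epsilon_1}(\textbf{x}) \subseteq B_{\epsilon_2}(\textbf{x})$, directly from the definition. Since the maximum of a fixed function over a subset cannot exceed its maximum over a superset, $\phi(\epsilon_1) \le \phi(\epsilon_2)$, so $\phi$ is non-decreasing. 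This already establishes the weak form of the claim with essentially no computation.

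To obtain the strict increase suggested by the statement, I would bring in first-order optimality information, consistent with the ``first-order adversarial example'' setting used in Theorem~1. Let $\textbf{x}^*$ be a maximizer of $L$ over $B_{\epsilon_1}(\textbf{x})$. In the regime relevant to adversarial training the constraint is active, i.e.\ $\|\textbf{x}^* - \textbf{x}\|_p = \epsilon_1$, because the loss still increases along the ascent direction and the optimum is pushed to the boundary. By the KKT conditions the gradient $\nabla_{\textbf{x}'} L(f_{\theta}(\textbf{x}^*), y)$ is then a positive multiple of the outward normal of the sphere at $\textbf{x}^*$. A first-order Taylor expansion of $L$ along the outward radial direction shows that there is a feasible point of the larger ball $B_{\epsilon_2}(\textbf{x})$ at which $L$ is strictly larger than $L(f_{\theta}(\textbf{x}^*), y)$; since $\phi(\epsilon_2)$ is at least this value, $\phi(\epsilon_2) > \phi(\epsilon_1)$.

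The main obstacle is that strict monotonicity cannot hold without a non-degeneracy hypothesis: if the unconstrained maximum of $L$ already lies in the interior of $B_{\epsilon_1}(\textbf{x})$, or if $\nabla_{\textbf{x}'} L$ vanishes at the maximizer, then the objective is locally flat and $\phi$ merely stays constant. I would therefore state explicitly the assumption that the constraint is active and the gradient at the maximizer is nonzero, which is exactly the situation produced by a first-order PGD attack on a not-yet-saturated example. Under this assumption the Taylor argument goes through; the only care needed is to choose the perturbed point so that it remains feasible in $B_{\epsilon_2}(\textbf{x})$ while staying in the ascent direction, which is immediate for $p \in \{2, \infty\}$ by stepping from $\textbf{x}^*$ radially outward by $\epsilon_2 - \epsilon_1$.
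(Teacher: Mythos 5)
Your proof is correct, and its first half takes a genuinely different (and more general) route than the paper's. The paper proves Theorem 2 entirely inside the linearized model of its Lemma 1: under the first-order Taylor expansion, the inner maximum has the closed form $L(f_{\theta}(\mathbf{x}), y) + \epsilon\,\|\nabla_{\mathbf{x}} L(f_{\theta}(\mathbf{x}), y)\|_q$, and monotonicity in $\epsilon$ is read off because this expression is linear in $\epsilon$ with the other two terms held fixed. Your nesting argument, $B_{\epsilon_1}(\mathbf{x}) \subseteq B_{\epsilon_2}(\mathbf{x})$ hence $\phi(\epsilon_1) \le \phi(\epsilon_2)$, needs no approximation, no differentiability, and no assumptions on $L$ whatsoever: it establishes (non-strict) monotonicity for the exact nonlinear objective rather than for the first-order surrogate, which is arguably the honest content of the theorem as stated. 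Your second half, strict increase when the constraint is active and the gradient at the maximizer is nonzero, is the same first-order mechanism the paper invokes, but you make explicit the non-degeneracy hypothesis that the paper leaves silent: even in the paper's own formula, if $\|\nabla_{\mathbf{x}} L\|_q = 0$ the maximum is constant in $\epsilon$, so strict increase cannot hold unconditionally, and your observation that an interior (unconstrained) maximizer forces flatness is exactly the failure mode the paper's one-line argument glosses over. The trade-off: the paper's route is immediate once Lemma 1 is accepted, but it proves the claim only for the linearized loss and implicitly assumes a nonvanishing gradient; your route proves weak monotonicity exactly and isolates precisely the conditions under which strictness can legitimately be claimed.
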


\subsection{Vulnerability-Aware Reweighting for Perturbation Radii}

In the preceding section 4.1, we presented a rationale against the use of uniform perturbation radii for crafting adversarial examples employed in AT. Instead, we advocate for assigning distinct perturbation radii for generating adversarial examples %during the inner maximization step 
based on the inherent vulnerabilities of their original natural examples. 
Here, we propose two measures of estimating the intrinsic vulnerability of individual natural examples from two perspectives: (1) the geometric proximity of each example to the decision boundary, captured using logit margins, and (2) the standard deviation of the DNN's output logits.
%We propose measuring these vulnerabilities in terms of the proximity of the natural data to the class boundaries.

%\begin{figure*}[ht]
%\centering    
%\includegraphics[width=0.7\textwidth ]{decision_boundary.png}
%\caption{I}
%\label{plot2}
%\vskip -0.1in
%\end{figure*}
\subsubsection{Margin-based Vulnerability Estimation}

Measuring the exact proximity of a data point to the decision boundary is not straightforward for non-linear models like DNNs. We adopt a measure of \textit{multi-class margin} described in \cite{koltchinskii2002empirical} to estimate the vulnerability or robustness of natural examples. Consider the predictions of a DNN denoted by $f_{\theta}$ and a labelled example $(x, y)$, the margin $d_{m}(x, y; \theta)$ is given as follows: %\jin{Replace all $x$ with $\textbf{x}$ in this section}

  \begin{equation}\label{margin}
   d_{m}(\textbf{x}, y; \theta)= f_{\theta}(\textbf{x})_y  - \max_{ k, k \neq y} f_{\theta}(\textbf{x})_k
   \end{equation}
where   $f_{\theta}(\textbf{x})_y$ is the model's predicted probability of the correct label $y$, and $\max_{ k, k \neq y} f_{\theta}(\textbf{x})_k$ is the largest prediction of the remaining classes.

We utilize the information provided by $d_{m}(\textbf{x}, y; \theta)$ in measuring the vulnerability of a natural input example $\textbf{x}$ as follows:

%\begin{equation} \label{margin-conditions}
%\begin{split}
%y \stackrel{}{\sim} \{-1, +1\}  \quad  \mu = (\overbrace{\eta,...,\eta}^{d = dim}) \\
%\small{\mathbf{d_{m}(\textbf{x}, y;\theta)}} = \begin{cases}
%		\small{d_{m}(\textbf{x}, y; \theta) > 0},&\text{\small{$\textbf{x}$ is correctly classified}}.\\ 
%        \small{d_{m}(\textbf{x}, y; \theta) = 0},&\text{\small{$\textbf{x}$ is located at class boundary}.} \\
%        \small{d_{m}(\textbf{x}, y; \theta) < 0},&\text{\small{$\textbf{x}$ is misclassified.}} \\
%		 \end{cases}%\frac{1}{2} \pi r^2
%\end{split}
%\end{equation}

\begin{itemize}
\item If $d_{m}(\textbf{x}, y; \theta) > 0$, $\textbf{x}$ is correctly classified and scored. We consider $\textbf{x}$ relatively robust.  
\item If $d_{m}(\textbf{x}, y; \theta) = 0$, it implies that $\textbf{x}$ has the same prediction score as the best of the remaining classes. As such, we consider $\textbf{x}$ to be located at the class boundary.
\item If $d_{m}(\textbf{x}, y; \theta) < 0$, $\textbf{x}$ is considered to be vulnerable, since it is located in a wrong region even before $\textbf{x}$ is adversarially perturbed.  
\item In addition to identifying whether a sample $\textbf{x}$ is vulnerable, we assess the degree of vulnerability based on the magnitude of the value returned by $d_{m}(\textbf{x}, y; \theta)$. For example, if $d_{m}(\textbf{x}1, y_1; \theta) > d_{m}(\textbf{x}_2, y_2; \theta)$, we infer that $\textbf{x}_2$ is relatively more vulnerable than $\textbf{x}_1$.
\end{itemize}

%In addition to indicating whether a sample $\textbf{x}$ is vulnerable or not, we determine degree of vulnerability based on the magnitude of the value returned by $d_{m}(\textbf{x}, y; \theta)$. For instance, if $d_{m}(\textbf{x}_1, y_1; \theta) >  d_{m}(\textbf{x}_2, y_2; \theta)$, we consider $\textbf{x}_2$ to be relatively more vulnerable than $\textbf{x}_1$. 

%\begin{itemize}
%\item If $d_{margin}(\textbf{x}, y; \theta) > 0$, $\textbf{x}$ is correctly classified and scored. We consider $\textbf{x}$ relatively robust.  
%\item If $d_{margin}(\textbf{x}, y; \theta) = 0$, it implies that $\textbf{x}$ has the same prediction score as the best of the remaining classes. As such, we consider $\textbf{x}$ to be located at the class boundary.
%5\item If $d_{margin}(\textbf{x}, y; \theta) < 0$, $\textbf{x}$ is considered to be vulnerable, since it is located in a wrong region even before $\textbf{x}$ is adversarially perturbed. 
%\item Lastly, in addition to indicating whether a sample $\textbf{x}$ is vulnerable or not, we determine degree of vulnerability based on the magnitude of the value returned by $d_{margin}(\textbf{x}, y; \theta)$. For instance, if $d_{margin}(\textbf{x}_1, y_1; \theta) >  d_{margin}(\textbf{x}_2, y_2; \theta)$, we consider $\textbf{x}_2$ to be relatively more vulnerable than $\textbf{x}_1$. 
%\end{itemize}

%We propose weighting adversarial training perturbation radii as follows:
\subsubsection{Standard-Deviation-based Vulnerability Estimation}
 In this section, we propose estimating the vulnerability of individual examples using a modified standard deviation of a DNN's model output logits.

The standard deviation serves as a metric to assess the distribution spread, where a smaller standard deviation implies a more uniform distribution. In the context of a model's output logits on a given input, an evenly spread distribution indicates a higher risk of misclassification. This is because the model's estimated probabilities for an input, with a more even spread among both correct and incorrect classes, suggest potential vulnerability to misclassification. It's noteworthy that previous research has established connections between various variance estimations and the difficulty or susceptibility of input examples \cite{agarwal2022estimating} as well as classes \cite{xu2021robust}. In this context, we leverage the standard deviation of predicted probabilities to gauge the vulnerability of individual samples.

Conventionally, the standard deviation is measured as the variation of random variables around the mean of the distribution. Here, we modify the original standard deviation formula by replacing the mean with the model's predicted probability of the example belonging to the  true class as follows: 
\begin{align} \label{standard-dev}
        d_{std}(\textbf{x}_i, y_i, \theta) = \sqrt{\frac{\sum_{k=1}^{C}  (f_{\theta}(\textbf{x}_i)_k - f_{\theta}(\textbf{x}_i)_{y_i})^2}{|C|}}
\end{align}
where $x_i$ and $y_i$ represent the input of a sample and its corresponding label, and $C$ is the number of classes. % is the same as the size of the DNN's output probabilities vector.
%In contrast to the conventional standard deviation measure that captures the spread of data around the mean of the data distribution, 
The proposed formulation in Eq. (\ref{standard-dev}) measures the spread of the model's predicted probabilities around the model's predicted probability of the true class. A low $d_{std}(x_i, y_i, \theta)$ implies that %values of the model's output probabilities vector are almost evenly spread and 
the model is not confident in its estimated probability of $x_i$ belonging to $y_i$, indicating higher risk and vulnerability of misclassification.

Contrary to the logit margin approach in Eq.(\ref{margin}), which calculates the disparity between the probability of the correct class for a sample and the probability of the nearest incorrect class, the suggested modified standard deviation considers the model's predictions for all classes.

\subsection{Weight Assignment for Perturbation Radii}
%\subsubsection{Margin-weighted Perturbation Bound (MWPB)}
%\jin{Why "Bound" instead of "Radii"?}
In Theorem 1, we establish that the inner maximization process inherently induces a more substantial increase in loss for samples that possess intrinsically high loss values. This holds true for misclassified natural samples, characterized by negative margins and exhibiting high loss values. Furthermore, considering that these misclassified examples are already situated within regions of high loss, we advocate for a strategy in which the inner maximization process for misclassified examples employs smaller perturbation radii. 
Conversely, the inner maximization process inherently leads to a relatively lower increase in loss for samples with low loss values, typically associated with positive margins. Consequently, we propose the utilization of larger perturbation radii for generating adversarial examples from these low-loss samples.

Therefore, we introduce two vulnerability-aware radius reweighting functions based on the two measures in Eq.~(\ref{margin}) and (\ref{standard-dev}) respectively. The first reweighting function, termed \textbf{Margin-weighted Perturbation Budget (MWPB)}, is formulated as follows:
%As a result, we introduce a radius reweighting function formulated as follows:
  \begin{equation}\label{margin-weighted-radii}
   {\epsilon}_{i} = exp(\alpha \cdot d_{m}(\textbf{x}_i, y_i; \theta)) * \epsilon
   \end{equation}
where $\epsilon = 8/255$, the commonly used fixed perturbation radius parameter,  $d_{m}(\textbf{x}_i, y_i; \theta)$ is given by Eq.~(\ref{margin}), %$x_i$  and $y_i$ represent a unique instance of natural example and label; $exp$ is an exponential function. 
and $\alpha$ is a hyperparameter that controls the weight of the function.

The  \textit{MWPB} reweighting  function in Eq. (\ref{margin-weighted-radii}) allocates larger perturbation radii $({\epsilon}_i > 8/255)$ when generating adversarial examples from natural samples characterized by positive margins. In contrast, it assigns smaller perturbation radii $({\epsilon}_i < 8/255)$ when crafting adversarial examples from samples with negative margins. %Additionally, it's important to note that 
Finally, adversarial examples originating from samples situated exactly at the class boundary are crafted using the default perturbation radii $({\epsilon}_i = 8/255)$.

%\subsubsection{Standard-Deviation-Weighted Perturbation Radii (SWPB)}
%\jin{Shouldn't the acronym be SDWPR?}
%Similar to the reweigthing function in Eq. (\ref{margin-weighted-radii}), we propose a reweighting function that based on the vulnerability of individual samples as estimated by the standard deviation formula stated in Eq.(\ref{standard-dev}) as follows:
The second reweighting function, termed \textbf{Standard-Deviation-Weighted Perturbation Budget (SDWPB)}, is as follows:
  \begin{equation}\label{std-weighted-radii}
   {\epsilon}_{i} = exp(\alpha \cdot d_{std}(\textbf{x}_i, y_i; \theta)) * \epsilon
   \end{equation}
where $d_{std}(\textbf{x}_i, y_i; \theta)$ is given by Eq.~(\ref{standard-dev}).    
The above \textit{SDWPB} reweighting function allocates comparatively larger perturbation radii when generating adversarial examples from natural instances with relatively larger $d_{std}(\textbf{x}_i, y_i; \theta)$ values. Unlike the \textit{MWPB} reweighting, \textit{SDWPB} does not assign perturbation radii ${\epsilon}_i$ to values less than $8/255$ because the proposed $d_{std}(\textbf{x}_i, y_i; \theta)$ is non-negative. However, both \textit{MWPB} and \textit{SDWPB} assign relatively larger perturbation radii to samples that are deemed to be  naturally more robust based on the metrics defined in Eq. (\ref{margin}) and (\ref{standard-dev}) respectively.

As seen in Theorem 1, first order adversarial examples crafted from relatively robust natural examples incur relatively smaller increment in losses. By assigning relatively larger perturbation radii for crafting adversarial examples from more robust natural examples, the reweighting functions in Eq. (\ref{margin-weighted-radii}) and (\ref{std-weighted-radii}) ensure that larger inner maximization losses for these adversarial examples. We show in Theorem 2 that increasing perturbation radii can increase the inner maximization loss.

%\textcolor{purple}{The main idea behind \textit{AT} is to minimize the maximum loss incurred by adversarial examples within a specified budget. However, it becomes apparent that the magnitude of loss increase resulting from the inner maximization step varies based on the losses of the original examples. These variations may indicate that adversarial examples generated from inherently robust natural examples may not exhibit a sufficiently high loss for achieving optimal robustness. Conversely, the losses of adversarial examples stemming from misclassified natural examples may be excessively high and exhibit poor generalization, as discussed in prior work \cite{balaji2019instance}. %Training a model on adversarial examples that achieve optimal loss maximization significantly enhances the model's robustness, \jin{I guess this means 'achieve optimal within a given budget'. Otherwise, Theorem 2 says increasing the budget always helps enhance robustness.} making it more challenging for adversaries to craft adversarial examples with superior maxima \cite{madry2017towards}.
%The larger perturbation radii assigned for generating adversarial examples from robust natural examples through the proposed reweighting function in Eq. \ref{weighted-radii} plays a crucial role in achieving an adversarial loss that surpasses what can be attained through inner maximization alone under a fixed perturbation radius of $\epsilon = 8/255$. }

\subsection{Applying the  Proposed Weighted Perturbation Budget Methods}
%As noted by \citep{madry2017towards}, 
Every adversarial training method is a variant of min-max optimization. Hence, our proposed reweighting methods may be applied to any adversarial training variant. We re-write the min-max adversarial training objective in Eq. (\ref{min-max}) as follows:
\begin{align} \label{mwpr-min-max}
        \min_{\bm{\theta}} \frac{1}{n} \sum_{i=1}^{n}  \left[ \max_{\textbf{x}_i' \in B_{{\epsilon}_i}(\textbf{x}_i)} L(f_{\theta}(\textbf{x}_i'), y_i) \right]
    %a = b + c
\end{align}
where each ${\epsilon}_i$ is computed according to Eq. (\ref{margin-weighted-radii}) (MWPB) or Eq. (\ref{std-weighted-radii}) (SDWPB) for each input-label pair $(\textbf{x}_i, y_i)$, and \(B_{{\epsilon}_i}(\mathbf{x_i}) : \{\textbf{x}_i' \in \mathcal{X}: \|{\textbf{x}_i}' - \textbf{x}_i\|_p \leq {\epsilon}_i \}\). For the purpose of our experiments, which we present in Section 5, we apply MWPB and SDWPB to popular existing AT variants standard \textit{AT} \cite{madry2017towards}, \textit{TRADES} \cite{zhang2019theoretically}, and \textit{MART} \cite{wang2019improving}. %We respectively termed them \textit{MWPB-AT}, \textit{MWPB-TRADES}, and \textit{MWPB-MART}.

\subsection{Challenge of Adversarial Training with Larger Perturbation Radii}
The adversarial loss landscape is unfavorable to optimization under large perturbation budgets  \cite{liu2020loss}. It is shown that when a perturbation size $\epsilon$ is large, the gradients become small due to decreased gradient magnitude in the initial sub-optimal region, making it challenging for the model to escape the sub-optimal initial region.
Large perturbation size may also encourage a model to find sharper minima. 
In contrast, smaller perturbation budgets facilitate larger gradient magnitude in the initial sub-optimal regions, which in turn help prevent the model from getting stuck in these regions.

Given that our proposed method requires perturbing a subset of the training data with relatively larger perturbation budgets, we use a two-phase training approach. We train the initial epochs with adversarial examples obtained under a smaller perturbation budget of ${\epsilon} / 2$. This allows the model to gradually adapt. Subsequently, we transition to the adversarial training objective  in Eq. (\ref{mwpr-min-max}), which employs larger perturbation budgets. We provide the algorithms for \textit{MWPB-AT} and \textit{SDWPB-AT} in the \textit{Appendix C}. %Supplementary material.

\section{Experiments}
In this section, we extensively evaluate the proposed method. To test the versatility of our method, we test on various datasets including CIFAR-10 \cite{krizhevsky2009learning}, SVHN \cite{netzer2011reading}, and TinyImageNet \cite{deng2009imagenet}. We apply simple data augmentations such as 4-pixel padding with 32 × 32 random crop and random horizontal flip on CIFAR-10 and Tiny Imagenet. We utilize Resnet-18 \cite{he2016deep} and Wideresnet-34-10 \cite{he2016deep} as the backbone models. For brevity, we respectively denote ResNet-18 and Wideresnet-34-10 as RN18 and WRN34-10.

\subsection{Experimental Setup}
\subsubsection{Training Parameters.} We trained the  networks using mini-batch gradient descent for 110 epochs, with momentum 0.9 and batch size 128. We use the weight decay of 5e-4 for training CIFAR-10 and 3.5e-3 for SVHN and Tiny Imagenet. The  initial learning rate is set to 0.1 (0.01 for SVHN and Tiny Imagenet), and divided by 10 in the 80-th epoch, and then at the 90-th epoch. We train the first 80 epochs with adversarial examples obtained via PGD with a smaller perturbation budget of $4/255$ and step size of $1/ 255$. Subsequently, we introduce MWPB-AT / SDWPB-AT, MWPB-TRADES / SDWPB-TRADES, and MWPB-MART / SDWPB-MART in the 81-st epoch to improve AT 
 \cite{madry2017towards}, TRADES \cite{zhang2019theoretically} and MART \cite{wang2019improving} respectively. 

\subsubsection{Hyperparameters.} 
The values of $\alpha$ in Eq. (\ref{margin-weighted-radii}) and (\ref{std-weighted-radii}) are determined heuristically for each of AT methods. 

\textbf{MWPB.} In MWPB-AT, MWPB-TRADES, and MWPB-MART on CIFAR-10, the $\alpha$ values are 0.58, 0.42, and 0.55, respectively. For Tiny Imagenet, the corresponding $\alpha$ values are 0.55, 0.4, and 0.7. On SVHN, %MWPB-AT, MWPB-TRADES, and MWPB-MART have $\alpha$ values of 
the corresponding $\alpha$ values are 0.5, 0.15, and 0.6.

\textbf{SDWPB.} For SDWPB-AT, SDWPB-TRADES, and SDWPB-MART on CIFAR-10, the $\alpha$ values are 0.62, 0.5, and 0.52. On Tiny Imagenet, the values are 0.5, 0.5, and 0.8. On SVHN, %SDWPB-AT, SDWPB-TRADES, and SDWPB-MART have 
the $\alpha$ values are 0.6, 0.2, and 0.7.

We  show how the hyperparameter $\alpha$ was selected and study the influence of  $\alpha$  on the natural and robust accuracy for MWPB-AT/SDWPB-AT on CIFAR-10 using WRN-34-10. We provide the results and more explanations in Appendix A.3. %We observe that ... (what is the observation about the impact of $\alpha$ values on the performance?)} 

\subsubsection{Baselines.} Our baselines include Standard AT  \cite{madry2017towards}, TRADES 
 \cite{zhang2019theoretically}, and MART 
  \cite{wang2019improving}. Furthermore, we conduct a comparative analysis of our approach against  MMA \cite{ding2019mma}, which also introduces adaptive perturbation radii to enhance adversarial robustness.   Lastly, we compare our results to other works that utilize logit-margin for improving adversarial robustness \textit{MAIL} \cite{liu2021probabilistic}, \textit{WAT} \cite{zeng2021adversarial}, AWP \cite{wu2020adversarial} and ST-AT \cite{li2023squeeze}. All the hyperparameters of the baselines are the same as  in their original papers. Nevertheless, we maintain consistency by using the same learning rate, batch size, and weight decay values as those utilized during the training of our proposed method.

\subsection{Threat Models}
We assess the performance of the proposed method attacks under \textit{White-box} and \textit{Black-box} settings and \textit{Auto attack}.

\textbf{White-box attacks.} These attacks have access to model parameters. To evaluate robustness on CIFAR-10 using RN-18 and WRN34-10, we apply the PGD attack with $\epsilon = 8/255$, step size $\kappa$ = $1/255$, $K = 20$; CW (CW loss \cite{carlini2017towards} optimized by PGD-20) attack with $\epsilon = 8/255$, step size $1/255$. 
On SVHN and Tiny Imagenet, we apply PGD attack with $\epsilon = 8/255$, step size $\kappa$ = $1/255$, $K = 100$. %We limited the white-box evaluation on TinyImageNet to PGD-20. 

\textbf{Black-box attacks.} An adversary does not have  access to the model parameters under  black-box settings. We tested the robust models trained on CIFAR-10 against strong black-box attacks  Square \cite{andriushchenko2020square} with 5,000 queries and SPSA \cite{uesato2018adversarial} with 100 iterations,
perturbation size of 0.001 (for gradient estimation), learning rate = 0.01, and 256 samples for each gradient
estimation. All black-box evaluations are made on trained WRN34-10.

\textbf{Auto attacks.} Lastly, we evaluated the  trained models on \textit{Autoattack} ($l_{\infty}$ and $l_2$)\cite{croce2020reliable}, which   is a powerful ensemble of attacks consisting of APGD-CE \cite{croce2020reliable}, APGD-T \cite{croce2020reliable}, FAB-T \cite{croce2020minimally}, and Square (a black-box attack) \cite{andriushchenko2020square} attacks.

\subsection{Performance Evaluation}
We present CIFAR-10 results using RN18 and WRN34-10 in Tables \ref{table:cifar10-rn18} and \ref{table:wrn-w-box-result}, respectively. Additionally, SVHN and Tiny Imagenet results using RN18 are reported in Tables \ref{table:svhn-rn18} and \ref{table:imagenet-rn18}. Experimental outcomes are averaged over three runs with random seeds, and standard deviations are omitted as they are deemed insignificant $(< 0.3)$.

\begin{table}[H]
\caption{Comparing white-box attack robustness (accuracy \%) for RN18 on CIFAR-10.}

\label{table:cifar10-rn18}
\vskip -0.9in
\begin{center}
\begin{small}
\begin{sc}
\begin{tabular}{lcccc}
%\hline
%Defense Method & Loss Function
%\\
\hline

Defense & Natural   &PGD-20&CW&AA\\
\hline
 AT & \textbf{84.10}
                     & 52.72 & 51.80 & 47.95\\
 MWPB-AT      & 83.78  & 56.25& 53.02 &49.96\\

 SDWPB-AT      & 83.56  & \textbf{56.69} & \textbf{53.51} &\textbf{50.01}\\
 \hline
 \hline
  MART        & 80.32& 55.15 & 49.35 & 47.63\\                                
 \textbf{MWPB-MART}        & \textbf{82.23}  & 57.10 & 52.57 &\textbf{49.53}\\
 \textbf{SDWPB-MART}        & 80.89& \textbf{57.36} & \textbf{52.66} & 49.45\\ 
 \hline
 \hline
 TRADES        & 82.65   & 52.82& 51.82&  48.96\\
\textbf{MWPB-TRADES} 
        & \textbf{82.89}
            
                           & \textbf{55.53}
                                 &\textbf{53.04} & \textbf{50.73} \\
 \textbf{SDWPB-TRADES}      & 82.69  & 55.36& 52.92 &50.19\\

 \hline                              
\hline

\end{tabular}
\end{sc}
\end{small}
\end{center}
\vskip -0.1in
\end{table}

\begin{table}[H]
\caption{Comparing white-box attack robustness (accuracy \%) for RN18 on SVHN.}
\label{table:svhn-rn18}
\vskip -0.9in
\begin{center}
\begin{small}
\begin{sc}
\begin{tabular}{lcccc}
%\hline
%Defense Method & Loss Function
%\\
\hline

Defense & Natural   & PGD-20 & CW  & AA\\
\hline
 AT & \textbf{92.27}
                     & 55.67 & 52.92 & 45.94\\
 \textbf{MWPB-AT}      & 91.45  & \textbf{61.81}& 55.89 &\textbf{49.73}\\
 \textbf{SDWPB-AT}      & 91.06  & 61.59& \textbf{56.40} &48.58\\
 \hline
 \hline
  MART        & \textbf{91.59}& 58.78 & 52.79 & 43.60\\

 \textbf{MWPB-MART}        & 91.51  & \textbf61.87 & 55.11 &48.45 \\
 \textbf{SDWPB-MART}      & 91.33  & \textbf{61.95} & \textbf{55.48} &\textbf{49.45}\\
 \hline
 \hline
 TRADES        & \textbf{90.85}   & 57.27& 53.59&  46.45\\
MWPB-TRADES 
        & 90.35
            
                           & \textbf{60.25}
                                 & 55.03 & 50.11 \\
SDWPB-TRADES      & 90.29  &60.11& \textbf{55.11} &\textbf{50.23}\\

 \hline                              
\hline
%\textbf{VIR-TRADES}& - &  -&- & -& -& -&  \\
%\textbf{VIR-AT}& -&  - &- & -& -& -&  \\

\end{tabular}
%\end{tabular}
\end{sc}
\end{small}
\end{center}
\vskip -0.1in
\end{table}

\begin{table}[H]
\caption{Comparing white-box attack robustness (accuracy \%) for RN18 on Tiny Imagenet.}
\label{table:imagenet-rn18}
\vskip -0.3in
\begin{center}
\begin{small}
\begin{sc}
\begin{tabular}{lcccc}
%\hline
%Defense Method & Loss Function
%\\
\hline

Defense & Natural   & PGD-20 & CW  & AA \\
\hline
 AT & 48.83
                     & 23.96 & 21.85 & 17.91 \\
 \textbf{MWPB-AT}      & 51.21  & 25.07& 23.11 &19.75\\

\textbf{SDWPB-AT}      & \textbf{51.43}  & \textbf{25.81}& \textbf{23.55} &\textbf{20.45}\\
 \hline
 \hline
  MART        & 46.01& 26.03 & 21.78 & 19.18 \\

 \textbf{MWPB-MART}        & \textbf{47.39}  & 27.15 & 22.89 &20.51\\

 \textbf{SDWPB-MART}        & 46.71  &\textbf{27.55} & \textbf{23.01} &\textbf{20.81}\\
 \hline
 \hline
 TRADES        & 49.11   & 22.82&17.79 &  16.82\\
\textbf{MWPB-TRADES} 
        & \textbf{52.12}
            
                           & 24.60
                                 & 19.85 & 18.15\\
                    
\textbf{SDWPB-TRADES}      & \textbf{53.11}  & \textbf{24.74}& \textbf{20.01} & \textbf{18.34}\\
 \hline                              
\hline
%\textbf{VIR-TRADES}& - &  -&- & -& -& -&  \\
%\textbf{VIR-AT}& -&  - &- & -& -& -&  \\

\end{tabular}
%\end{tabular}
\end{sc}
\end{small}
\end{center}
\vskip -0.1in
\end{table}

\subsubsection{Performance on natural examples.}
Training with adversarial examples crafted with larger perturbation budgets ($>8/255$) tends to lower natural accuracy. The warm-up period utilized in our training before introducing \textit{MWPB} and \textit{SDWPB} helped ease the introduction of larger perturbation radii in the later epochs. The warm-up approach, where training starts normally before introducing reweighting or other adaptive methods, is common in related work, e.g., \cite{ding2019mma,liu2021probabilistic,fakorede2023vulnerability}, to mention a few. Here, we train the first 80 epochs with $\epsilon$=4/255 and a step size 1/255 then transition to the adaptive perturbation budgets. Experimental results show that our methods, despite of larger perturbation radii used on some examples, yield better natural accuracy in some cases especially on CIFAR-10 and Tiny Imagenet. We provide more information in Appendix A.1

\subsubsection{Comparison with vanilla baselines}
We compared our proposed method with vanilla baselines  standard AT \cite{madry2017towards}, TRADES \cite{zhang2019theoretically} and MART \cite{wang2019improving}. Experimental results demonstrate that the introduction of \textit{MWPB} and \textit{SDWPB} lead to enhancements in \textit{AT}, \textit{TRADES}, and \textit{MART}. Moreover, our proposed methods exhibit improvements in robust accuracy without compromising natural accuracy. These performance gains are consistent across different datasets and baselines. %network architectures, namely Resnet-18 and Wideresnet-34-10. 
Specifically, when combined with \textit{AT}, \textit{MWPB-AT} showcases notable improvements against adversarial attacks  PGD-20 (+3.06), CW (+2.41), AA($l_{\infty}$) (+2.24) on CIFAR-10 when using WRN34-10. Similarly, on datasets  SVHN and Tiny Imagenet with RN18, \textit{MWPB-AT} outperforms \textit{AT} against PGD-20, CW, and Autoattack. 
The performance of \textit{MWPB} remains consistent when integrated with \textit{TRADES} and \textit{MART}. \textit{MWPB-TRADES} exhibits enhancements in both natural accuracy and robustness against attacks  PGD, CW, and Autoattack. Similarly, \textit{MWPB-MART} shows considerable improvements, especially against Autoattack, on CIFAR-10 when using WRN-34-10 These improvements extend to datasets  SVHN and Tiny Imagenet with RN-18.

Likewise, %Like \textit{MWPB-AT}, \textit{MWPB-TRADES} and \textit{MWPB-MART}, 
\textit{SDWPB-AT}, \textit{SDWPB-TRADES}, and \textit{SDWPB-MART} improve over \textit{AT, TRADES}, and \textit{MART} respectively. \textit{SDWPB-TRADES} yields a better improvement over TRADES than \textit{MWPB-TRADES} (+0.25) on CIFAR-10 when using WRN-34-10. However, on CIFAR-10, SDWPB appears to marginally reduce the natural accuracy on \textit{AT}, \textit{TRADES}, and \textit{MART}. Experimental results also suggest that SDWPB generally performs better than MWPB on Tiny Imagenet, as may be observed in Table \ref{table:imagenet-rn18}.

Lastly, experimental results also show that our method improves performance on strong black-box attacks  Square and SPSA, summarized in the last two columns of Table \ref{table:wrn-w-box-result}.

\begin{table*}[h!]
\caption{Comparing white-box and black-box attack robustness (accuracy \%) for WRN34-10 on CIFAR-10.}
\label{table:wrn-w-box-result}
\vskip -0.3in
\begin{center}
\begin{small}
\begin{sc}
\begin{tabular}{lcccccccc}
%\hline
%Defense Method & Loss Function
%\\
\hline

Defense & Natural   & PGD-20 & CW  & AA ($L_{\infty}$)& AA ($L_2$)& SQUARE & SPSA& \\
&&$\epsilon$ = 8/255&$\epsilon$ = 8/255&$\epsilon$ = 8/255&$\epsilon$ = 128/255&&&\\
\hline
 AT & 86.21
                     & 56.12 & 54.95 & 51.92&58.52& 60.12&61.05  \\
\textbf{MWPB-AT}      & \textbf{86.82}  & 59.18& \textbf{57.36} &\textbf{54.16}&\textbf{61.09}&\textbf{61.15}&63.07& \\
 \textbf{SDWPB-AT}        & 86.09& \textbf{59.36} & 57.04 & 54.08&60.83&60.41&\textbf{63.14}&  \\ 
 \hline
 \hline
  MART        & 84.17& 58.10 & 54.51 & 51.11&57.75&58.74&58.91&  \\

 \textbf{MWPB-MART}        & \textbf{85.70}  & 60.65 & 56.78 &53.80&\textbf{60.41} &\textbf{60.83}&\textbf{62.02}&  \\
 \textbf{SDWPB-MART}        & 85.31& \textbf{60.71} & \textbf{56.82} & \textbf{53.88}&60.09&60.35&61.89&  \\ 
 \hline
 \hline
 TRADES        & 84.70   & 56.30& 54.51&  53.06&58.05& 59.16&61.15&\\
\textbf{MWPB-TRADES} 
        & \textbf{86.09}
            
                           & \textbf{59.10}
                                 & \textbf{57.04}& 54.38&\textbf{60.03} &\textbf{60.77}&62.19 &\\
\textbf{SDWPB-TRADES}        & 85.62& 58.99 & 57.01 & \textbf{54.49}&59.89&60.69&\textbf{62.31}&  \\ 

 \hline                              
\hline
%\textbf{VIR-TRADES}& - &  -&- & -& -& -&  \\
%\textbf{VIR-AT}& -&  - &- & -& -& -&  \\

\end{tabular}
%\end{tabular}
\end{sc}
\end{small}
\end{center}
\vskip -0.1in
\end{table*}

\begin{table*}[h!]
\caption{Comparing white-box  and black-box attack robustness (accuracy \%) of various margin-based approaches for WRN34-10 on CIFAR-10, and other prominent baselines}.
\label{table:margin-other}
\vskip -0.3in
\begin{center}
\begin{small}
\begin{sc}
\begin{tabular}{lcccccccc}
%\hline
%Defense Method & Loss Function
%\\
\hline

Defense & Natural   & PGD-20 & CW  & AA &  SPSA&\\
\hline
 MMA (\cite{ding2019mma})        & 86.29& 57.12 & 57.59 & 44.52&59.87 & \\ 
 \hline
 WAT (\cite{zeng2021adversarial}) & 85.13
                     & 56.63 & 53.97 & 50.01& 60.75& \\
\hline
 MAIL  (\cite{liu2021probabilistic})      & 86 .81 & 60.49& 51.45 &47.11& 59.25&\\
 
 \hline

GAIRAT (\cite{zhang2020geometry}) 
        & 85.41
            
                           & 60.76
                                 &45.02& 42.29 &52.32 &\\
\hline

\hline
ST-AT (\cite{li2023squeeze}) 
        & 84.91
            
                           & 57.52
                                 &55.11&53.54  &61.34 &\\
\hline

AWP (\cite{wu2020adversarial}) 
        &85.36
            
                           & 58.04
                                 &55.92& 53.92 & 62.57 &\\

 \hline                                
 \textbf{MWPB-AT} (ours)        & 86.85  & 59.18 & 57.36 
 &54.16 &63.07&  \\
 \hline
 \textbf{SDWPB-AT}(ours)      & 86.09 & 59.36& 57.04 &54.08&63.14& \\

\hline
\textbf{MWPB-AWP} (ours + awp)
        & \textbf{87.61}
            
                           & 61.56
                                 &58.54& 56.02 &64.11 &\\
                                                                  
\hline
\textbf{SDWPB-AWP} (ours + awp)
        & 87.59
            
                           & \textbf{61.89}
                                 &\textbf{59.09}& \textbf{56.22} &\textbf{64.22} &\\
 
 \hline

\hline

%\textbf{VIR-TRADES}& - &  -&- & -& -& -&  \\
%\textbf{VIR-AT}& -&  - &- & -& -& -&  \\

\end{tabular}
%\end{tabular}
\end{sc}
\end{small}
\end{center}
\vskip -0.1in
\end{table*}

\subsubsection{Comparison with other adaptive radii, margin-based, and recent methods}
We compare our proposed method to  \textit{MMA} \cite{ding2019mma}, which also aims to improve adversarial training by enabling the adaptive selection of the ``correct`` perturbation radii. \textit{MMA} minimizes adversarial losses  at the ``shortest possible perturbation`` for individual examples.  Experimental results displayed in Table \ref{table:margin-other} show that the proposed \textit{MWPB-AT} outperforms \textit{MMA} on natural accuracy (+0.46), PGD-20 (+2.06), AA (+9.64), and SPSA (+3.2), albeit \textit{MMA} slightly performs better  on CW attack (-0.33). Similarly, \textit{SDWPB-AT} performs better than \text{MMA} on PDG-20 (+2.24),  AA (+ 9.56).

It's important to highlight that \textit{MMA} employs a bisection search algorithm to determine optimal perturbation radii for adversarial examples, whereas our approach involves a simpler reweighting of the commonly used perturbation radius. This difference in methodology is worth noting, as the bisection search employed by \textit{MMA} can be computationally more expensive.

Prior works \textit{WAT} \cite{zeng2021adversarial} and \textit{MAIL} \cite{liu2021probabilistic} have incorporated the idea of multi-class margin specified in Eq.(\ref{margin}) to improve adversarial robustness. Specifically, these methods utilize multi-class logit margins to reweight adversarial losses, assigning larger weights to losses corresponding to easily misclassified adversarial examples. Experimental results in Table \ref{table:margin-other} show that our methods perform better than these methods on stronger attacks  CW and Auto attack. Also, these methods have been argued to show signs of gradient obfuscation \cite{fakorede2023vulnerability}, given their low performance on strong black-box attacks. 

The proposed methods also significantly outperform a prominent reweighting approach \textit{GAIRAT}  \cite{zhang2020geometry} on stronger attacks  CW and AA. MWPB-AT and SDWPB-AT improve over GAIRAT on AA by 11.87\%  and 11.79\% , respectively. Similarly, the proposed methods outperform recent work \textit{ST-AT} \cite{li2023squeeze} on natural accuracy and all other attacks evaluated. Combining \textit{AWP} \cite{wu2020adversarial} with MWPB-AT and SDWPB-AT respectively improve over AWP %combining AWP with standard \textit{AT} \cite{madry2017towards} \jin{what is the entry for combining AWP with AT?}
on all attacks as  observed in Table~\ref{table:margin-other}. 

\subsubsection{Distribution of perturbation radii}
We show in Fig.~1 in the Appendix the perturbation radii distributions for MWPR-AT/SDWPB-AT, MWPB-TRADES/SDWPB-TRADES, and MWPB-MART/SDWPB-MART for RN-18 over  50,000 training samples of CIFAR-10. The perturbation radii distribution is computed on the best-performing epoch in each case. 

In addition, we added more experimental results showing the robustness of the proposed methods under various perturbation radii and strong adaptive attack FAB \cite{croce2020minimally} in Appendix A.1.
%We show in Fig. \ref{histogram}, the perturbation radii distribution for MWPR-AT, MWPB-TRADES, and MWPB-MART for RN-18 over the 50,000 training samples of CIFAR-10. The perturbation radii distribution is computed on the best-performing epoch in each case.In addition, we study robustness 
%\jin{Reviewers asked about this: Table 6 and Appendix A.3 in revised ICLR submission.}
%\textcolor{red}{OF: We are constrained by space here. However, I am adding the perturbation radii in the Appendix to be submitted within a week.} \jin{The text should be in the main paper, with figures in appendix. }

%\jin{Mention Ablation studies on perturbation sizes? Appendix A.1 in revised ICLR submission}

\section{Conclusion}
In this paper, we argue that natural examples, from which adversarial examples are generated, exhibit differing levels of inherent vulnerabilities. As a result, we advocate against the use of uniform perturbations in the inner maximization step of the adversarial training framework. Rather, we propose instance-specific weighting functions for determining the perturbation budgets when crafting  adversarial examples for adversarial training. The weighting function assesses the vulnerability of each natural example and utilizes this information for determining perturbation radii when generating adversarial examples. Experimental results show that our proposed approach consistently enhances the performance of popular adversarial training methods across various datasets and under different attacks. %Furthermore, our method demonstrates improvements compared to related adaptive radii and margin-based methods.
 %research efforts.

\bibliography{tmlr}
\bibliographystyle{tmlr}

\appendix
\section{Appendix}
\subsection{Further Experiments}

\subsubsection{Effect of warm-up period on natural accuracy.}
In an ideal scenario, larger perturbation budgets typically lead to a reduction in natural accuracy. However, we addressed this by initiating training with a smaller perturbation budget ($\epsilon = 4/255$) and a step size of $1/255$ until the 80th epoch. Our experimental results illustrate how this training approach effectively mitigates the impact on natural accuracy in the proposed MWPB. Specifically, the experiments presented below were conducted on WRN-34-10 on CIFAR-10.

\begin{table}[H]%[!hbt] % was !h
\caption{Studies on \textit{AT}, \textit{MWPB-AT} and \textit{SDWPB-AT} showing the influence of the warming up training before applying reweighting. } 
\label{table:ablation-mwpb-at}
\setlength{\tabcolsep}{3pt}
\vskip 0.05in
\begin{center}
\begin{small}
\begin{sc}
\begin{tabular}{lcccc}
\hline
\hline
Method& Natural & PGD-20& AA \\
\hline

      AT& 	86.21 & 56.21&51.92\\
  
   MWPB-AT (no warm-up)&84.39 & 58.71 & 53.87 \\
   SDWPB-AT (no warm-up)&83.09 & 59.19 & 53.84 \\

   MWPB-AT (with warm-up) &86.85& 59.18 & 54.16  \\
   SDWPB-AT (with warm-up) &86.02& 59.36 & 54.08 \\

 \hline
 \hline
\end{tabular}
\end{sc}
\end{small}
\end{center}
\vskip -0.1in
\end{table}

\vskip 0.2in

\subsubsection{Robustness against PGD-20 under various perturbation sizes and FAB attack }
Given that our approach rely on training models with adversarial samples crafted using adaptive perturbation budget, we study the robustness of the proposed methods under various $l_{\infty}$-bounded perturbation sizes. Furthermore, we tested against a strong attack, FAB \cite{croce2020minimally}, which adaptively tracks the decision boundary with the aim of changing the class of a given input. Our experimental results displayed in Table \ref{varying-pert}, show that combining \textit{MWPB} and \textit{SDWPB} with AT, TRADES and MART consistently yield improved robustness to PGD attacks crafted under varied adversarial perturbations. In addition, our methods improve the robustness of \textit{AT}, \textit{TRADES} and \textit{MART} respectively to \textit{FAB} attack.

\begin{table*}[!h]
\caption{Comparing white-box attack robustness (accuracy \%) for WRN-34-10 on CIFAR-10 under various $l_{\infty}$-bounded  perturbation sizes  for PGD-20 and FAB attack.}
\label{varying-pert}
\vskip -0.3in
\begin{center}
\begin{small}
\begin{sc}
\begin{tabular}{lcccccccc}
%\hline
%Defense Method & Loss Function
%\\
\hline

Defense & PGD-20  & PGD-20 & PGD-20   &  FAB\\
& $\epsilon$ = 6/255 &$\epsilon$ = 10/255& $\epsilon$ = 12/255&$\epsilon$ = 128/255&\\
\hline
 AT & 64.49
                     & 49.58 & 45.29 &52.02\\
 MWPB-AT      & \textbf{67.12}  & \textbf{52.08}& \textbf{48.81}& \textbf{54.52}& \\
 \hline
 \hline
  MART        & 65.41& 52.31 & 48.84 &52.18\\

 MWPB-MART        & \textbf{67.19}  & \textbf{55.63} & \textbf{51.96} &\textbf{54.25}\\
 \hline
% \hline
TRADES        & 64.19   & 51.92& 48.23& 53.59\\
MWPB-TRADES 
        & \textbf{65.82}
            
                           & \textbf{53.58}
                                 &\textbf{50.11}  &\textbf{55.17}\\

 \hline                              
%\hline
%\textbf{VIR-TRADES}& - &  -&- & -& -& -&  \\
%\textbf{VIR-AT}& -&  - &- & -& -& -&  \\

\end{tabular}
%\end{tabular}
\end{sc}
\end{small}
\end{center}
\vskip 0.5in
\end{table*}

\subsection{Distribution of perturbation radii}
We show in Fig. \ref{histogram}, the perturbation radii distribution for MWPB-AT, MWPB-TRADES, and MWPB-MART for RN-18 over the 50,000 training samples of CIFAR-10. The perturbation radii distribution is computed on the best-performing epoch in each case. Experimental results show that MWPB-AT has utilized the minimum perturbation radii of 0.018 and the maximum perturbation radii of 0.0552. MWPB-MART and MWPB-TRADES have minimum perturbation radii of 0.0200 and 0.02253, respectively. The maximum perturbation radii of MWPB-MART and MWPB-TRADES are respectively 0.0554 and 0.0471. 

Similarly, we show in Fig. (\ref{sdwpb-histogram}) the perturbation radii distribution for SDWPB-AT, SDWPB-TRADES, and SDWPB-MART, respectively. Our experimental results show that the adversarial example with the minimum perturbation has radii of 0.0317 and maximum perturbation radii of 0.057 for SDWPB-AT. The adversarial examples used for training SDWPB-TRADES and SDWPB-MART have minimum perturbations of 0.0313  and 0.0312. The maximum perturbation radii of SDWPB-MART and SDWPB-TRADES are respectively 0.0485 and 0.0498.

\subsection{Impact of $\alpha$ hyperparameter.}

In this analysis, we investigate the influence of the hyperparameter $\alpha$ in the reweighting functions introduced in Equations (4) and (5). $\alpha$ plays a crucial role in controlling the strength of the reweighting functions. Our experiments across various combinations of \textbf{MWPB} and \textit{SDWPB} with \textit{AT}, \textit{TRADES}, and \textit{MART} reveal a trade-off. When $\alpha$ is set to a low value, natural accuracies tend to be relatively higher, but robust accuracies are relatively lower. Conversely, as the value of $\alpha$ increases, robust accuracy on PGD-20 and Auto-attack improves, but at the expense of natural accuracy. Our selection of the optimal $\alpha$ values is guided by finding a balance that ensures a reasonable trade-off between natural and robust accuracy, particularly in the case of Autoattack robust accuracy. We presents the experimental results in Tables (\ref{table:ablation-mwpb-at}) - (\ref{table:ablation-sdwpb-mart}).

\begin{table}[!hbt]%[!hbt] % was !h
\caption{Studies on \textit{MWPB-AT} showing the impact of the $\alpha$  hyperparameter. } 
\label{table:ablation-mwpb-at}
\setlength{\tabcolsep}{3pt}
\vskip 0.05in
\begin{center}
\begin{small}
\begin{sc}
\begin{tabular}{lcccc}
\hline
\hline
$\alpha$& Natural & PGD-20& AA &\\
\hline

      0.10& 88.93 & 56.25&51.67&\\
  
   0.20&88.56 & 56.85 & 51.88& \\

   0.30 &88.29& 57.80 & 52.58&  \\
   0.40 &87.72 & 58.11 & 52.85& \\
    0.50  &87.40 & 58.78 & 53.64&  \\
     \textbf{0.58}  &86.82 & 59.18 & 54.16& \\
      0.70  &85.55 & 59.12 & 54.14&  \\
       0.85  &85.11 & 59.39 & 54.02& \\
        1.2  &83.75 & 59.69 & 53.50& \\

 \hline
 \hline
\end{tabular}
\end{sc}
\end{small}
\end{center}
\vskip -0.1in
\end{table}

\begin{table}[H]%[!hbt] % was !h
\caption{Studies on \textit{MWPB-TRADES} showing the impact of the $\alpha$  hyperparameter. } 
\label{table:ablation-mwpb-trades}
\setlength{\tabcolsep}{3pt}
\vskip 0.05in
\begin{center}
\begin{small}
\begin{sc}
\begin{tabular}{lcccc}
\hline
\hline
$\alpha$& Natural & PGD-20& AA &\\
\hline

      0.1& 86.82 & 58.51&53.76&\\
  
   0.2& 86.21& 58.63 &54.01 & \\

   0.3 &86.11& 58.82 & 54.27&  \\
   \textbf{0.42} &86.09 & 59.10 & 54.38& \\
    0.5  &85.63 & 58.91 & 54.31&  \\
    0.6  &85.41 & 58.87 & 54.22& \\
    0.85  &84.43 & 58.63 & 54.05& \\

 \hline
 \hline
\end{tabular}
\end{sc}
\end{small}
\end{center}
\vskip -0.1in
\end{table}

\begin{table}[H]%[!hbt] % was !h
\caption{Studies on \textit{MWPB-MART} showing the impact of the $\alpha$  hyperparameter. } 
\label{table:ablation-mwpb-mart}
\setlength{\tabcolsep}{3pt}
\vskip 0.05in
\begin{center}
\begin{small}
\begin{sc}
\begin{tabular}{lcccc}
\hline
\hline
$\alpha$& Natural & PGD-20& AA &\\
\hline

      0.1& 88.21 & 59.17&52.16&\\
  
   0.2&87.79 & 59.69& 52.63& \\

   0.3 &87.18& 60.04 & 52.81&  \\
   0.4 &86.76 &  60.19& 53.32& \\
   0.5&86.13&60.48&53.71\\
   \textbf{0.55}  &85.70 & 60.65 & 53.80&  \\
     0.65 &85.11 & 60.82 & 53.68& \\

 \hline
 \hline
\end{tabular}
\end{sc}
\end{small}
\end{center}
\vskip -0.1in
\end{table}

\begin{table}[H]%[!hbt] % was !h
\caption{Studies on \textit{SDWPB-AT} showing the impact of the $\alpha$  hyperparameter.}  
\label{table:ablation-sdwpb-at}
\setlength{\tabcolsep}{3pt}
\vskip 0.05in
\begin{center}
\begin{small}
\begin{sc}
\begin{tabular}{lcccc}
\hline
\hline
$\alpha$& Natural & PGD-20& AA &\\
\hline

      0.10& 88.85 & 56.32&51.51&\\
  
   0.20&88.76 & 56.91 & 51.75& \\

   0.30 &88.16& 57.93 & 52.29&  \\
   0.40 &87.23 & 58.52 & 52.83& \\
    0.50  &86.58 & 58.92 & 53.51&  \\
     \textbf{0.62}  &86.02 & 59.36 & 54.08& \\
      0.70  &85.39 & 59.35 & 53.94&  \\
       0.85  &84.65 & 59.71 & 53.90& \\
        1.2  &82.13 & 59.88 & 53.19& \\

 \hline
 \hline
\end{tabular}
\end{sc}
\end{small}
\end{center}
\vskip -0.1in
\end{table}

\begin{table}[H]%[!hbt] % was !h
\caption{Studies on \textit{SDWPB-TRADES} showing the impact of the $\alpha$  hyperparameter. } 
\label{table:ablation-sdwpb-trades}
\setlength{\tabcolsep}{3pt}
\vskip 0.05in
\begin{center}
\begin{small}
\begin{sc}
\begin{tabular}{lcccc}
\hline
\hline
$\alpha$& Natural & PGD-20& AA &\\
\hline

      0.1& 86.64 & 58.57&53.71&\\
  
   0.2&86.10 & 58.64& 53.89& \\

   0.3 &85.91& 58.71 & 54.09&  \\
   0.4 &85.73 & 58.81 & 54.22& \\
   \textbf{0.52}   &85.62 & 58.99 & 54.49&  \\
     0.6  &84.91 & 58.75 & 54.31& \\
     0.85&84.67&58.49&54.13&\\

 \hline
 \hline
\end{tabular}
\end{sc}
\end{small}
\end{center}
\vskip -0.1in
\end{table}

\begin{table}[H]%[!hbt] % was !h
\caption{Studies on \textit{SDWPB-MART} showing the impact of the $\alpha$  hyperparameter. } 
\label{table:ablation-sdwpb-mart}
\setlength{\tabcolsep}{3pt}
\vskip 0.05in
\begin{center}
\begin{small}
\begin{sc}
\begin{tabular}{lcccc}
\hline
\hline
$\alpha$& Natural & PGD-20& AA &\\
\hline

      0.1& 87.67 & 59.33&52.21&\\
  
   0.2&86.95 & 59.97& 52.72& \\

   0.3 &86.34& 60.11 & 52.91&  \\
   0.4 &86.17 &  60.49& 53.26& \\
   \textbf{0.52}   &85.31 & 60.77 & 53.88&  \\
     0.6  &84.76 & 60.72 & 53.74& \\

 \hline
 \hline
\end{tabular}
\end{sc}
\end{small}
\end{center}
\vskip -0.1in
\end{table}

\begin{figure*}[!h]
\centering    
\includegraphics[width=1.04\textwidth]{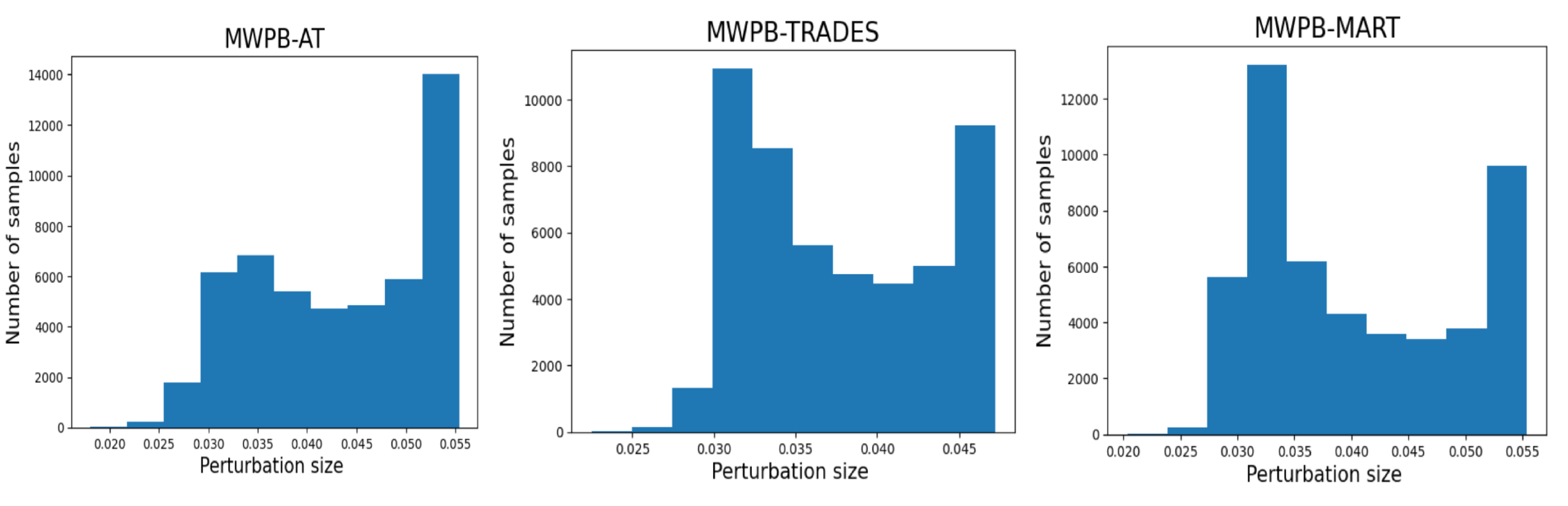}
\caption{Plots showing the distribution of perturbation radii for MWPB-AT, MWPB-TRADES and MWPB-MART respectively.\label{histogram} }
%\label{plot2}
\vskip 0.1in
\end{figure*}

\begin{figure*}[!h]
\centering    
\includegraphics[width=1.04\textwidth]{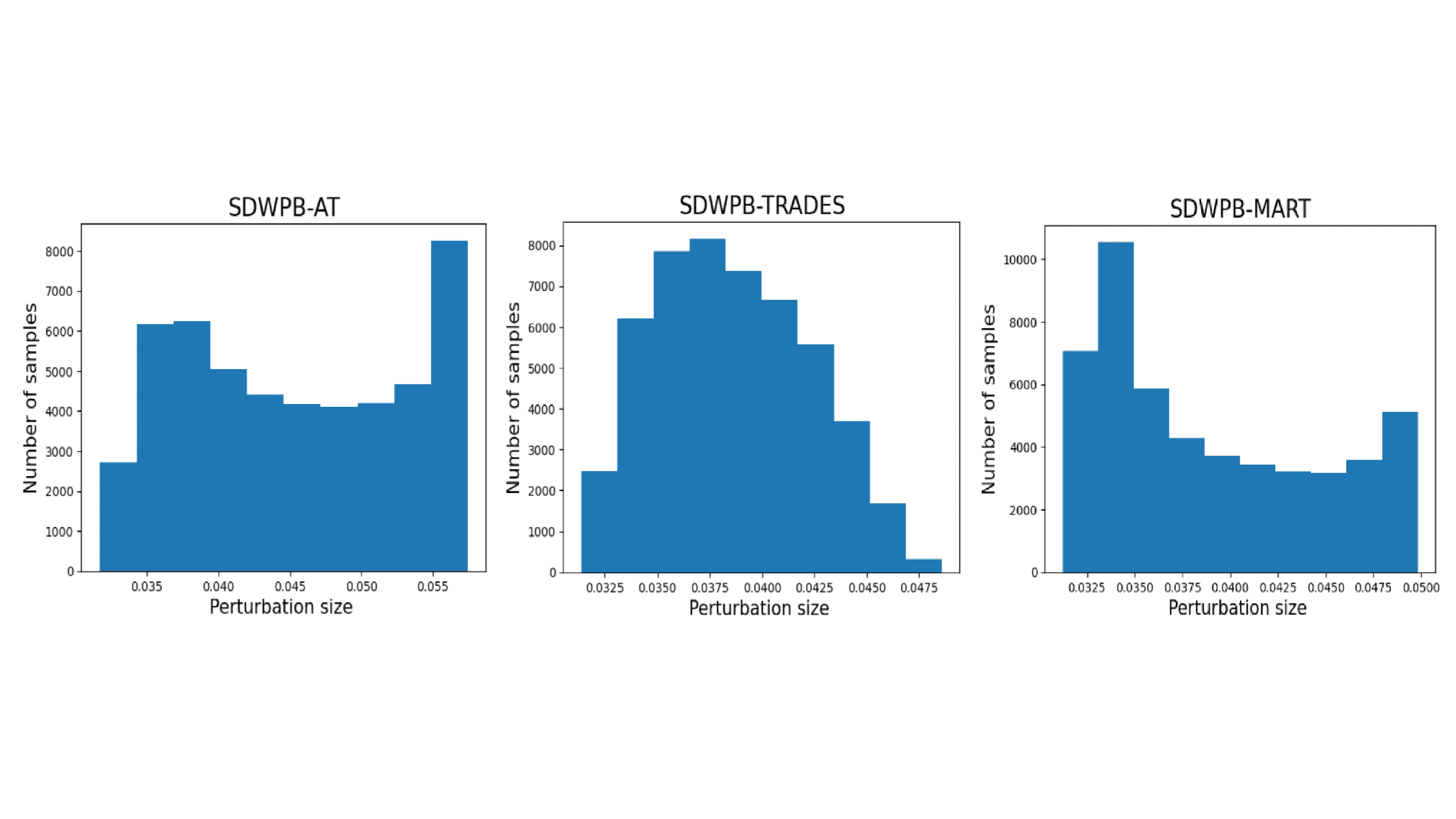}
\caption{Plots showing the distribution of perturbation radii for SDWPB-AT, SDWPB-TRADES and SDWPB-MART respectively.\label{sdwpb-histogram} }
%\label{plot2}
\vskip 0.1in
\end{figure*}

\newpage
\section{Appendix}
\textit{\textbf{Proof of Theorem 1}}

The PGD attack utilized for inner maximization is a first-order adversary \cite{madry2017towards,simon2019first}.

\begin{lemma}[\cite{pang2020boosting}]
Given a loss function $\mathcal{L}$ and under the first-order Taylor expansion, the solution to the inner maximization:
\[ \max_{\textbf{x}' \in B_{\epsilon}(\textbf{x})} L(f_{\theta}(\textbf{x}'), y)\]
is $x^* = x + \epsilon \mathbb{U}_p({\nabla}_x \mathcal{L}(x))$. Furthermore, $\mathbf{L}(f(x^*), y) = \mathcal{L}(f(x), y) + \epsilon {\| {\nabla}_x \mathcal{L}(x) \|}_q$, where ${\|.\|}_q$ is the dual norm of ${\|.\|}_p$
\end{lemma}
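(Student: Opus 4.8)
The plan is to prove monotonicity directly from the nested structure of the perturbation balls, and then to recover the explicit growth rate from the first-order Lemma stated above. First I would note that the objective $L(f_{\theta}(\cdot), y)$ is continuous and each $B_{\epsilon}(\textbf{x})$ is compact, so the maximum $M(\epsilon) := \max_{\textbf{x}' \in B_{\epsilon}(\textbf{x})} L(f_{\theta}(\textbf{x}'), y)$ is attained and well defined. The core observation is that for any $0 \le \epsilon_1 \le \epsilon_2$ the constraint sets are nested, $B_{\epsilon_1}(\textbf{x}) \subseteq B_{\epsilon_2}(\textbf{x})$, since any $\textbf{x}'$ with $\|\textbf{x}' - \textbf{x}\|_p \le \epsilon_1$ automatically satisfies $\|\textbf{x}' - \textbf{x}\|_p \le \epsilon_2$. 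Because the maximum of a fixed objective over a larger feasible set can never decrease, we obtain $M(\epsilon_1) \le M(\epsilon_2)$ immediately, which is the (non-strict) monotonicity in the statement and requires no approximation.

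To obtain the quantitative and strict version, I would invoke Lemma 1. Under the first-order Taylor expansion the attained adversarial loss is
\begin{equation*}
L(f_{\theta}(\textbf{x}^*), y) = L(f_{\theta}(\textbf{x}), y) + \epsilon \, \|\nabla_{\textbf{x}} L(\textbf{x})\|_q,
\end{equation*}
which is affine in $\epsilon$ with slope $\|\nabla_{\textbf{x}} L(\textbf{x})\|_q$. Since $\|\cdot\|_q$ is a norm, this slope is nonnegative, so comparing two values $\epsilon_1 < \epsilon_2$ shows the loss is non-decreasing, and strictly increasing whenever $\nabla_{\textbf{x}} L(\textbf{x}) \neq \vzero$ at the base point. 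As the natural example $\textbf{x}$ is generically not already a critical point of the loss, the slope is positive and the increase is strict; this is exactly the quantitative behavior used to motivate MWPB and SDWPB.

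The main subtlety I expect is reconciling the two arguments rather than any genuine technical obstacle: the nested-ball bound is exact but only yields $\le$, whereas the Lemma gives a strictly increasing closed form valid only within the first-order (local) regime. I would therefore present the nested-ball argument as the primary, unconditional proof of monotonicity, and the Lemma-based affine formula as the justification for the strict, quantitative increase. The only care needed is to observe that the nonnegativity of $\|\cdot\|_q$ removes any need for a sign analysis of the individual gradient components, and that the affine dependence on $\epsilon$ makes the conclusion hold for every pair $\epsilon_1 < \epsilon_2$, not merely infinitesimally.
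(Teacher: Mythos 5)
Your proposal does not prove the statement at hand. Lemma 1 asserts two things: (i) under the first-order Taylor expansion, the maximizer of the inner problem is $x^* = x + \epsilon\,\mathbb{U}_p(\nabla_x \mathcal{L}(x))$, and (ii) the attained value is $\mathcal{L}(f(x),y) + \epsilon\,\|\nabla_x \mathcal{L}(x)\|_q$ with $\|\cdot\|_q$ dual to $\|\cdot\|_p$. What you prove instead is monotonicity of $M(\epsilon) := \max_{x' \in B_{\epsilon}(x)} L(f_{\theta}(x'), y)$ in $\epsilon$ via the nested-ball inclusion $B_{\epsilon_1}(x) \subseteq B_{\epsilon_2}(x)$ --- which is the content of the paper's Theorem~2, not of Lemma~1 --- and at the decisive step you write that you ``would invoke Lemma 1,'' i.e., you assume exactly the identity you were asked to establish. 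The argument is circular with respect to the target statement: neither set inclusion nor nonnegativity of a norm says anything about the \emph{form} of $x^*$ or about the closed-form increment $\epsilon\,\|\nabla_x\mathcal{L}(x)\|_q$.

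The missing idea is the standard dual-norm (H\"older) computation. Linearize the objective, $\mathcal{L}(f_{\theta}(x'),y) \approx \mathcal{L}(f_{\theta}(x),y) + \langle \nabla_x \mathcal{L}(x),\, x' - x\rangle$, and maximize the linear term over $\delta = x' - x$ subject to $\|\delta\|_p \le \epsilon$. H\"older's inequality gives $\langle \nabla_x \mathcal{L}(x), \delta \rangle \le \|\delta\|_p\, \|\nabla_x \mathcal{L}(x)\|_q \le \epsilon\, \|\nabla_x \mathcal{L}(x)\|_q$, and by the definition of the dual norm there is a vector of unit $\ell_p$-norm achieving equality; $\mathbb{U}_p(\nabla_x \mathcal{L}(x))$ denotes precisely this maximizing direction (for $p = \infty$, $q = 1$ it is $\mathrm{sign}(\nabla_x \mathcal{L}(x))$, recovering the FGSM/PGD step). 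Substituting $x^* = x + \epsilon\,\mathbb{U}_p(\nabla_x \mathcal{L}(x))$ into the expansion yields the stated value. Note that the paper itself does not reprove this lemma but imports it from \cite{pang2020boosting}; your nested-ball observation is correct and even has the merit of being exact rather than first-order, but it belongs to Theorem~2 (which the paper, somewhat less rigorously, derives from the same affine-in-$\epsilon$ formula you cite), so it cannot substitute for the dual-norm argument that the lemma actually requires.
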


\begin{lemma}[\cite{katharopoulos2017biased}]
Let $\mathcal{L}$ be either a negative log-likelihood or square error loss function. Then, $\mathcal{L}(f_{\theta}(x_1), y_1) > \mathcal{L}(f_{\theta}(x_2), y_2)$ $\iff$ $\| {\nabla}_x \mathcal{L}(f_{\theta}(x_1), y_1)\| >  \| {\nabla}_x \mathcal{L}(f_{\theta}(x_2), y_2)\|$.
    
\end{lemma}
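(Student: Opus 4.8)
\emph{Proof sketch.} The plan is to reduce the claimed equivalence to the existence, for each admissible loss, of a strictly increasing scalar function $g$ satisfying $\|\nabla\mathcal{L}\| = g(\mathcal{L})$: once such a $g$ is available, the statement is immediate, since a strictly increasing map preserves and reflects order, so $\mathcal{L}(f_{\theta}(x_1),y_1) > \mathcal{L}(f_{\theta}(x_2),y_2)$ holds if and only if $g(\mathcal{L}(f_{\theta}(x_1),y_1)) > g(\mathcal{L}(f_{\theta}(x_2),y_2))$. To expose the loss-specific structure I would first apply the chain rule to separate the dependence of the loss on the network output from that of the output on the input. Writing $z = f_{\theta}(x)$ and $J = \partial z/\partial x$ for the input--output Jacobian, we have $\nabla_x \mathcal{L} = J^{\top}\nabla_z \mathcal{L}$, which reduces the analysis to the norm of $\nabla_z \mathcal{L}$, the gradient with respect to the loss's direct argument.

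Second, I would treat the two loss families separately at the output level. For the squared-error loss $\mathcal{L} = \tfrac{1}{2}\|z - y\|^2$ one has $\nabla_z \mathcal{L} = z - y$, hence $\|\nabla_z \mathcal{L}\| = \sqrt{2\mathcal{L}}$, so $g(t) = \sqrt{2t}$ is strictly increasing and the equivalence is exact. For the negative log-likelihood loss with softmax outputs, $\mathcal{L} = -\log s_y$ with $s = \mathrm{softmax}(z)$, a direct computation gives $\nabla_z \mathcal{L} = s - e_y$, where $e_y$ is the one-hot vector of the true class, so that $\|\nabla_z \mathcal{L}\|^2 = (1 - s_y)^2 + \sum_{k \neq y} s_k^2$. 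Substituting $s_y = e^{-\mathcal{L}}$ shows the leading term $(1 - e^{-\mathcal{L}})^2$ is strictly increasing in $\mathcal{L}$, and in the two-class case the identity $\sum_{k \neq y} s_k^2 = (1 - s_y)^2$ collapses the whole norm to $\sqrt{2}\,(1 - e^{-\mathcal{L}})$, yielding an exact strictly increasing $g$. For more than two classes the residual-mass term is not determined by $s_y$ alone, so I would record the additional hypothesis (concentration of the incorrect-class probability mass, or reduction to the dominant term) under which monotonicity persists.

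The principal obstacle is the passage from $\nabla_z \mathcal{L}$ back to $\nabla_x \mathcal{L}$: the factor $J^{\top}$ is evaluated at the two distinct inputs $x_1$ and $x_2$, and in principle the differing local geometry of $f_{\theta}$ could reverse the ordering of the norms established at the output level. I would close this gap by invoking the standing locally-comparable-sensitivity assumption already implicit in the first-order adversarial setting of Lemma 1, under which the input--output responsiveness of $f_{\theta}$ at the two points does not overturn the output-level ordering; absent such a condition the statement is to be read, as in the cited source, for the gradient with respect to the loss's direct argument.
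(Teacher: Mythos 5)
The paper never proves this lemma, so there is no internal argument to compare yours against: it is imported wholesale by citation to \cite{katharopoulos2017biased} and then used as a black box in the proof of Theorem 1. Your sketch is therefore the more careful treatment, and where it is concrete it is correct: for squared error $\|\nabla_z \mathcal{L}\| = \sqrt{2\mathcal{L}}$ gives an exact strictly increasing $g$, and in the binary softmax case the norm collapses to $\sqrt{2}\,(1-e^{-\mathcal{L}})$, likewise exact. But you should recognize that the two ``residual gaps'' you flag are not removable technicalities --- they are genuine failure modes of the lemma as stated. At the output level with $C>2$ classes, $\|\nabla_z\mathcal{L}\|^2 = (1-s_y)^2 + \sum_{k\neq y} s_k^2$, and at fixed loss the second term ranges over the whole interval from $(1-s_y)^2/(C-1)$ to $(1-s_y)^2$; concretely, a sample with $s_y = 0.5$ and off-class mass concentrated on one class has squared gradient norm $0.5$, while a sample with $s_y = 0.45$ and off-class mass spread over many classes has squared norm $\approx 0.3$, so the higher-loss sample has the smaller gradient --- the ``if and only if'' fails outright. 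Your proposed concentration hypothesis is exactly what is needed to restore it, and without some such condition no proof can exist.

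The Jacobian obstruction is equally material and cannot be dodged by reading the lemma ``for the gradient with respect to the loss's direct argument,'' because the paper applies it specifically to input-space gradient norms $\|\nabla_x \mathcal{L}\|_q$ inside the proof of Theorem 1 (to conclude that higher-loss natural examples suffer larger first-order loss increments). Since $\nabla_x\mathcal{L} = J^\top \nabla_z \mathcal{L}$ with $J$ evaluated at two different inputs, per-sample variation in $\|J\|$ can reverse any output-level ordering for either loss, including squared error where the output-level statement is exact. So your assessment is the right one: the unconditional equivalence is false as stated, your sketch proves the correct conditional versions, and the cited source in fact only establishes an upper-bound/correlation relationship between per-sample gradient norm and loss (as a heuristic for importance sampling), not an equivalence --- meaning the paper's Lemma 2 over-strengthens its own reference, and Theorem 1 should properly be read as holding under the qualifications you identified.
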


\begin{proof}
    We anchor the proof of Theorem 1 on Lemma 1 and Lemma 2.

Given two input-label pairs $(x_1, y_1)$ and $(x_2, y_2)$, such that $\mathcal{L}(f_{\theta}(x_1), y) > \mathcal{L}(f_{\theta}(x_2), y)$. Then according to Lemma 1, the inner maximization of $\mathcal{L}(f_{\theta}(x_1), y_1)$ and $\mathcal{L}(f_{\theta}(x_2), y_2)$ under the same perturbation bound $\epsilon$ are given as:
\[\mathcal{L}(f_{\theta}(x^*_1), y_1) = \mathcal{L}(f_{\theta}(x_1), y_1) + \epsilon {\| {\nabla}_{x_1} \mathcal{L}f_{\theta}(x_1, y_1) \|}_q \]
\[\mathcal{L}(f_{\theta}(x^*_2), y_2) = \mathcal{L}(f_{\theta}(x_2), y_2) + \epsilon {\| {\nabla}_{x_2} \mathcal{L}f_{\theta}(x_2, y_2) \|}_q \]

It can seen that the increment in loss resulting from the inner maximization depend on $ \epsilon {\| {\nabla}_{x_1} \mathcal{L}(f_{\theta}(x_1), y_2) \|}_q$ and $ \epsilon {\| {\nabla}_{x_2} \mathcal{L}(f_{\theta}(x_2), y_2) \|}_q$. Since $\epsilon$ is constant then the loss increment depend on norm of the gradient. Also according to Lemma 2, ${\| {\nabla}_x L(f_{\theta}(x_1), y_1)\|}_q >  {\| {\nabla}_x L(f_{\theta}(x_2), y_2)\|}_q$. Therefore under a fixed perturbation radius $\epsilon$,
\begin{enumerate}
    \item  $\mathcal{L}(f_{\theta}(x^*_1), y_1)$ $>$ $\mathcal{L}(f_{\theta}(x^*_2), y_2)$
    \item  $\mathcal{L}(f_{\theta}(x^*_1), y_1) - \mathcal{L}(f_{\theta}(x_1), y_1) $  $>$ $\mathcal{L}(f_{\theta}(x^*_2), y_2) - \mathcal{L}(f_{\theta}(x_2), y_2)$
\end{enumerate}

\end{proof}

\textit{\textbf{Proof of Theorem 2}}

\begin{proof}
   
    Consider an input-label pair $(x, y)$ and a loss function $\mathcal{L}$ with the inner maximization \[ \max_{\textbf{x}' \in B_{\epsilon}(\textbf{x})} L(f_{\theta}(\textbf{x}'), y)\] and the solution \[L(f(x^*), y) = L(f(x), y) + \epsilon {\| {\nabla}_x L(f_{\theta}(\textbf{x}), y) \|}_q \]. Then, given a perturbation radius ${\epsilon}_g$ $>$ $\epsilon$, we have: \[L(f(x^*), y) = L(f(x), y) + {\epsilon}_g{\| {\nabla}_x L(f_{\theta}(\textbf{x}), y) \|}_q\]. Since $L(f(x), y)$ and ${\| {\nabla}_x L(f_{\theta}(\textbf{x}'), y) \|}_q$ remain fixed, and ${\epsilon}_g$ $>$ $\epsilon$, then $L(f(x^*), y)$ computed within ${\epsilon}_g$ is greater than $L(f(x^*), y)$ within $\epsilon$.
\end{proof}

\section{Appendix}
Here we present the proposed algorithm for MWPB-AT
\begin{algorithm}[H]
\caption{MWPB-AT Algorithm.}\label{alg:mwpb}
\hspace*{\algorithmicindent} \textbf{Input:} a neural network model with the parameters $\theta$, step sizes $\kappa_i$ and $\kappa$, and a training dataset $\mathcal{D}$ \hspace*{\algorithmicindent} of size n.\\
\hspace*{\algorithmicindent} \textbf{Output:} a robust model with parameters $\theta^*$
\begin{algorithmic}[1] %what is 1?
\State  \textbf{set} $\epsilon = 8/255$
\For{$epoch = 1$ to num\_epochs}
\For{$batch = 1$ to num\_batchs}
\State sample a mini-batch $\{(x_i, y_i)\}_{i=1}^{M}$ from $\mathcal{D}$;\Comment{mini-batch of size $M$.}

\For{$i = 1$ to M} 
\State $d_{m}(\textbf{x}_i, y_i; \theta)= f_{\theta}(\textbf{x}_i)_{y_i}  - \max_{ k, k \neq {y_i}} f_{\theta}(\textbf{x}_i)_k$
\State  ${\epsilon}_{i} = exp(\alpha \cdot d_{m}(\textbf{x}_i, y_i; \theta)) * \epsilon$
 \State ${\kappa}_i  = {\epsilon}_i / 4$
\State $\mathbf{x}_i^{'}$ $\leftarrow$ $\mathbf{x}_i$ + 0.001 $\cdot$  $\mathcal{N}(0, 1)$;   \Comment{$ \mathcal{N}(0, I)$ is a Gaussian distribution with zero mean and identity variance.}

\For{$k = 1$ to $K$}

\If{$epoch\leq 80$} 
    \State $\mathbf{x}_{i}' \leftarrow \prod_{ B_{{\epsilon} / 2}(\mathbf{x}_i)}(x_{i} + {\kappa} / 2 \cdot sign (\nabla_{\mathbf{x}_{i}'}  L(f_{\theta}(\mathbf{x}_{i}'), y_i) )$;  \Comment{$\prod$ is a projection operator.}
\Else
    \State $\mathbf{x}_{i}' \leftarrow \prod_{ B_{{\epsilon}_i} (\mathbf{x}_i)}(x_{i} + {\kappa}_i \cdot sign (\nabla_{\mathbf{x}_{i}'} L(f_{\theta}(\mathbf{x}_{i}'), y_i) )$
    
\EndIf
%\State  $\mathbf{x}_{i}' \leftarrow \prod_{ B_{{\epsilon}_i}(\mathbf{x}_i)}(x_{i} + {\kappa}_i \cdot sign (\nabla_{\mathbf{x}_{i}'} \cdot L(f_{\theta}(\mathbf{x}_{i}'), y_i) )$;     

\EndFor

%\State $w_i$ $\leftarrow$ as per equation %(\ref{eq:D2B_w_fn});
\EndFor

%\State $w_i = M \times w_i / \sum_j w_j, \forall i \in [M]$; \Comment{$w_i=1$ during burn-in period.}
\State $\theta$ $\leftarrow$  $\theta - \kappa  \nabla_\theta\sum_{i=1}^M  L(f_{\theta}(\mathbf{x}_{i}'), y_i)$ %$ + \mathcal{R}(x_i, y_i; \theta)$; \jin{what is R()?}
\EndFor
\EndFor

\end{algorithmic}
\end{algorithm}

\begin{algorithm}[H]
\caption{SDWPB-AT Algorithm.}\label{alg:sdwpb}
\hspace*{\algorithmicindent} \textbf{Input:} a neural network model with the parameters $\theta$, step sizes $\kappa_i$ and $\kappa$,a training dataset $\mathcal{D}$ \hspace*{\algorithmicindent} of size n and $|C|$ is the number of classes.\\
\hspace*{\algorithmicindent} \textbf{Output:} a robust model with parameters $\theta^*$
\begin{algorithmic}[1] %what is 1?
\State  \textbf{set} $\epsilon = 8/255$
\For{$epoch = 1$ to num\_epochs}
\For{$batch = 1$ to num\_batchs}
\State \small sample a mini-batch $\{(x_i, y_i)\}_{i=1}^{M}$ from $\mathcal{D}$;\Comment{mini-batch of size $M$.}

\For{$i = 1$ to M} 
\small \State $d_{std}(\textbf{x}_i, y_i; \theta) = \{ \sum_{k=1}^C\frac{(f_{\theta}(\textbf{x}_i)_k - f_{\theta}(\textbf{x}_i)_{y_i})^2)}{|C|}\}^{0.5}$
%(((f_{\theta}(\textbf{x}_i)_{y_i} - f_{\theta}(\textbf{x}_i)_j)^2) / |C|)^{0.5}$//
\State  ${\epsilon}_{i} = exp(\alpha \cdot d_{std}(\textbf{x}_i, y_i; \theta)) * \epsilon$
 \State ${\kappa}_i  = {\epsilon}_i / 4$
\State $\mathbf{x}_i^{'}$ $\leftarrow$ $\mathbf{x}_i$ + 0.001 $\cdot$  $\mathcal{N}(0, 1)$;   \Comment{\footnotesize $ \mathcal{N}(0, I)$  is a Gaussian distribution with zero mean and identity variance.}

\For{$k = 1$ to $K$}

\If{$epoch\leq 80$} 
    \State $\mathbf{x}_{i}' \leftarrow \prod_{ B_{{\epsilon} / 2}(\mathbf{x}_i)}(x_{i} + {\kappa} / 2 \cdot sign (\nabla_{\mathbf{x}_{i}'}  L(f_{\theta}(\mathbf{x}_{i}'), y_i) )$;  \Comment{$\prod$ is a projection operator.}
\Else
    \State $\mathbf{x}_{i}' \leftarrow \prod_{ B_{{\epsilon}_i} (\mathbf{x}_i)}(x_{i} + {\kappa}_i \cdot sign (\nabla_{\mathbf{x}_{i}'} L(f_{\theta}(\mathbf{x}_{i}'), y_i) )$
    
\EndIf
%\State  $\mathbf{x}_{i}' \leftarrow \prod_{ B_{{\epsilon}_i}(\mathbf{x}_i)}(x_{i} + {\kappa}_i \cdot sign (\nabla_{\mathbf{x}_{i}'} \cdot L(f_{\theta}(\mathbf{x}_{i}'), y_i) )$;     

\EndFor

%\State $w_i$ $\leftarrow$ as per equation %(\ref{eq:D2B_w_fn});
\EndFor

%\State $w_i = M \times w_i / \sum_j w_j, \forall i \in [M]$; \Comment{$w_i=1$ during burn-in period.}
\State $\theta$ $\leftarrow$  $\theta - \kappa  \nabla_\theta\sum_{i=1}^M  L(f_{\theta}(\mathbf{x}_{i}'), y_i)$ %$ + \mathcal{R}(x_i, y_i; \theta)$; \jin{what is R()?}
\EndFor
\EndFor

\end{algorithmic}
\end{algorithm}

\end{document}